\newtheorem{theorem}{Theorem}
\newtheorem{lemma}{Lemma}
\newtheorem{definition}{Definition}
\definecolor{mygreen}{RGB}{28,172,0} % color values Red, Green, Blue
\definecolor{mylilas}{RGB}{170,55,241}
\newcommand{\brasq}[1]{\left[{#1}\right]}
\newcommand{\braro}[1]{\left({#1}\right)}
\newcommand{\bracu}[1]{\left\{{#1}\right\}}
\newcommand{\TR}[1]{\Tr{\left({#1}\right)}}
\newcommand{\Lx}[1]{\mathcal{P}\braro{{#1}}}
\newcommand{\mycomment}[1]{}
\newcommand{\dvec}{{\bf{d}}}
\newcommand{\wvec}{{\bf{w}}}
\newcommand{\xvec}{{\bf{x}}}
\newcommand{\vvec}{{\bf{v}}}
\newcommand{\gvec}{{\bf{g}}}
\newcommand{\onevec}{{\bf{1}}}
\newcommand{\zerovec}{{\bf{0}}}
\newcommand{\Amat}{{\bf{A}}}
\newcommand{\Dmat}{{\bf{D}}}
\newcommand{\Hmat}{{\bf{H}}}
\newcommand{\Jmat}{{\bf{J}}}
\newcommand{\Imat}{{\bf{I}}}
\newcommand{\Lmat}{{\bf{L}}}
\newcommand{\Mmat}{{\bf{M}}}
\newcommand{\Qmat}{{\bf{Q}}}
\newcommand{\Smat}{{\bf{S}}}
\newcommand{\Umat}{{\bf{U}}}
\newcommand{\Wmat}{{\bf{W}}}
\newcommand{\Ymat}{{\bf{Y}}}
\newcommand{\define}{\stackrel{\triangle}{=}}
\newcommand{\be}{\begin{equation}}
\newcommand{\ee}{\end{equation}}
\newcommand{\beqna}{\begin{eqnarray}}
\newcommand{\eeqna}{\end{eqnarray}}
\begin{document}

\runningtitle{Efficient Graph Laplacian Estimation by Proximal Newton}
\runningauthor{Y. Medvedovsky, E. Treister, T. Routtenberg}

% If your paper is accepted and the title of your paper is very long,
% the style will print as headings an error message. Use the following
% command to supply a shorter title of your paper so that it can be
% used as headings.
%
%\runningtitle{I use this title instead because the last one was very long}

% If your paper is accepted and the number of authors is large, the
% style will print as headings an error message. Use the following
% command to supply a shorter version of the authors names so that
% they can be used as headings (for example, use only the surnames)
%
%\runningauthor{Surname 1, Surname 2, Surname 3, ...., Surname n}

%%%%%%%%%%%%%%%%%%%%%%%%%%%%%%%%%%%%%%%%%%%%%%%%%%%%%%%%%%%%%%%
%\title{Efficient Graph Laplacian Estimation by Proximal Newton\thanks{This research was supported by the Israel Ministry of National Infrastructure, Energy, and Water Resources and by the ISRAEL SCIENCE FOUNDATION (grant No. 1148/22). This work was also supported in part by the Israeli Council for Higher Education (CHE) via the Data Science Research Center, Ben-Gurion University of the Negev, Israel.}}
%\author{Yakov Medvedovsky\thanks{School of Electrical and Computer Engineering, Ben-Gurion University of the Negev.} \and Eran Treister\thanks{Department of Computer Science, Ben-Gurion University of the Negev.} \and Tirza Routtenberg$^\dag$}

%\date{}
%\maketitle
%%%%%%%%%%%%%%%%%%%%%%%%%%%%%%%%%%%%%%%%%%%%%%%%%%%%%%%%%%%%%%
\twocolumn[
\aistatstitle{Efficient Graph Laplacian Estimation by Proximal Newton}
\aistatsauthor{ Yakov Medvedovsky \And Eran Treister \And Tirza Routtenberg }
\aistatsaddress{Electrical and Computer Engineering,\\ Ben-Gurion University. \And  Computer Science Dept, \\ Ben-Gurion University. \And Electrical and Computer Engineering,\\ Ben-Gurion University. } 
]
%%%%%%%%%%%%%%%%%%%%%%%%%%%%%%%%%%%%%%%%%%%%%%%%%%%%%%%%%%%%%%%%%%

\begin{abstract}
  The Laplacian-constrained Gaussian Markov Random Field (LGMRF) is a common multivariate statistical model for learning a weighted sparse dependency graph from given data. This graph learning problem can be formulated as a maximum likelihood estimation (MLE) of the precision matrix, subject to Laplacian structural constraints, with a sparsity-inducing penalty term. This paper aims to solve this learning problem accurately and efficiently.
First, since the commonly used $\ell_1$-norm penalty is inappropriate in this setting and may lead to a complete graph, we employ the nonconvex minimax concave penalty (MCP), which promotes sparse solutions with lower estimation bias. Second, as opposed to existing first-order methods for this problem, we develop a second-order proximal Newton approach to obtain an efficient solver, utilizing several algorithmic features, such as using conjugate gradients, preconditioning, and splitting to active/free sets. 
Numerical experiments demonstrate the advantages of the proposed method in terms of both computational complexity and graph learning accuracy compared to existing methods.
\end{abstract}

\section{{INTRODUCTION}}
Graphs are fundamental mathematical structures used in various fields to represent data, signals, and processes. Weighted graphs naturally arise when processing networked data applications, such as computer, social, sensor, energy, transportation, and biological networks \citep{Newman_2010}, where the data is inherently related to a graph associated with the underlying network. 
As a result, graph learning is used in a wide range of applications, both for virtual, data-based networks 
and for physical networks of infrastructures;
 the latter include the brain \citep{vecchio2017connectome}, financial analysis \citep{giudici2016graphical}, and electrical networks \citep{Deka_Chertkov_Backhaus_2017}.
 In this context, a fundamental graph learning problem is the estimation of the Laplacian matrix, which plays a central role in spectral graph theory \citep{chung1997spectral} and machine learning \citep{von2007tutorial}.
%In spectral graph theory \citep{chung1997spectral}, basic properties of graphs are investigated by analyzing characteristic polynomials, eigenvalues, and eigenvectors of the associated graph Laplacian matrix. In machine learning, graph Laplacians are extensively used as kernels, especially in spectral clustering \citep{von2007tutorial}.
Moreover, in the field of graph signal processing (GSP) \citep{shuman2013emerging},  the Laplacian matrix is used to extend basic signal processing operations, such as filtering \citep{milanfar2012tour}, sampling \citep{anis2016efficient},  and signal recovery \citep{narang2012perfect,kroizer2022routtenberg}. Therefore, accurate and efficient Laplacian learning is extremely valuable for data processing in networks.

Gaussian Markov Random Fields (GMRFs) \citep{rue2005gaussian} are probabilistic graphical models that have been studied extensively over the last two decades \citep{banerjee2006convex,hsieh2013big,hsieh2011sparse}. In particular, there has been a significant effort to develop approaches for learning sparse precision matrices under GMRF models, where this precision represents a graph topology. The graphical LASSO (GLASSO) \citep{friedman2008sparse} is a widely-used approach to estimate the sparse precision matrix using $\ell_1$-norm regularization.
Laplacian-constrained GMRF (LGMRF) models are a specific type of GMRF, in which the precision is a Laplacian matrix.
Laplacian constraints imply that smooth signals,  which have similar values for nodes connected by large weights,  have a higher probability \citep{ying2020nonconvex}. %\sout{Such property is desired in modeling smooth graph signals where}
%That is, a large graph weight between two nodes represents a high similarity between their signal values, hence, yields a smooth function defined on that pair of nodes.
Smooth graph signals are common in various network applications, such as power systems \citep{dabush2021state,drayer2018detection}.
Thus, LGMRF models
have been widely studied in semi-supervised learning frameworks \citep{zhu2003combining},  to analyze real-world datasets 
%(such as stock data 
 \citep{de2020learning}, and in GSP \citep{dong2016learning,egilmez2017graph}.  
%However, GLASSO and other GMRF estimation methods do not take into account the Laplacian constraints. Therefore, developing new efficient methods for Laplacian learning under the LGMRF model is needed.

%%%%%%%%%%%%%%%%%%%%%%%%%%%%%%%%%%%%%%%%%%%%%%%%%%%%%%%%%%%%%%%%%%%%%%%%%%%%%%%%%%%%%%%%%%%
The estimation of standard and Laplacian-constrained GMRF models are closely related. However, the extension of GLASSO and similar GMRF estimation algorithms to the case of LGMRF, which also considers the Laplacian constraints, is not straightforward. For example, since the Laplacian matrix is sparse in many applications, then, similar to the GLASSO approach, $\ell_1$-norm regularization methods were used to promote the Laplacian sparsity \citep{egilmez2017graph,kumar2019structured,liu2019block,zhao2019optimization}. However, it was recently shown that the $\ell_1$-norm is an inappropriate penalty for promoting the sparsity of the precision matrix under the LGMRF model, as it leads to an inaccurate recovery of the connectivity pattern of the graph
\citep{ying2020does}. 
%As a result, the recovery of the support of the Laplacian (i.e., the connectivity pattern of the graph) is inaccurate when using methods with $\ell_1$-norm penalty.
Thus, to properly address the Laplacian estimation problem, other sparsity-promoting penalty terms should be used \citep{ying2020does,ying2020nonconvex}. 
However, these existing methods are of first-order (i.e., gradient-based), while in general GMRF estimation literature \citep{oztoprak2012newton,hsieh2013big,hsieh2011sparse,treister2014block} it is known that second-order methods, i.e., proximal Newton \citep{lee2014proximal}, yield better performance in terms of run time. In such approaches, a quadratic approximation is applied on the smooth part, while the non-smooth regularization term remains intact. This exploits a connection between the gradient and Hessian of the MLE objective and hence is efficient for GMRF estimation. 
The work of \citet{Moghaddam_2016} employs a second-order approach for Laplacian-constrained topology identification. However, it is based on the convex $\ell_1$-norm penalty, which was found inappropriate for LGMRF \citep{ying2020does}, and the edge weights are not constrained to be nonnegative, which may result in non-Laplacian solutions. 
Therefore, there is a need for second-order methods for Laplacian learning that utilize proper regularization (to yield accurate sparsity patterns) and are efficient in run time. 
%%%%%%%%%%%%%%%%%%%%%%%%%%%%%%%%%%%%%%%%%%%%%%%%%%%%%%%%%%%%%%%%%%%%%%

\textbf{Contribution.}
In this paper, we present an efficient graph estimation method for the  LGMRF model based on the proximal Newton approach. Our method involves solving a nonconvex penalized MLE problem with MCP penalty, instead of the commonly-used $\ell_1$-norm penalty. At each iteration, the smooth part of the objective is approximated by a second-order Taylor expansion around the current precision matrix, while the non-smooth penalty \textit{and Laplacian constraints} remain intact. The resulting Newton problem is a penalized quadratic minimization under linear constraints to guarantee that the next iterate is a Laplacian matrix. 
%Then, we use a projected nonlinear conjugate gradient method to solve
%the constrained Newton problem, leading to a significant computational speedup of the proposed approach compared to gradient-based methods. 
To the best of our knowledge,  the proposed method is the first proximal Newton method for the LGMRF model estimation. Furthermore, our framework includes two nontrivial algorithmic novelties in this context. 
First, the inner, constrained, Newton problem is solved by a projected nonlinear conjugate gradient method. This yields a significant computational speedup over gradient-based first-order methods. Second, we introduce a diagonal preconditioner to improve the performance further. We show some theoretical results regarding the problem and the algorithms we propose.  Furthermore, we demonstrate the effectiveness of the proposed method in learning sparse graphs through numerical experiments.

\textbf{Organization and notation.}
The rest of this paper is organized as follows. In Section \ref{Prob_for_sec}, we describe the considered  Laplacian learning problem, simplify its formulation, and discuss the sparsity-promoting penalty function.
In Section \ref{MCP_GLN_sec}, we develop the proposed NewGLE method. 
Simulations are shown in Section \ref{Simulations_sec} and
 the paper is concluded in Section \ref{conc}.

In the rest of this paper we denote vectors and matrices in boldface lowercase and uppercase letters, respectively.
 The elements of the vectors $\onevec$ and $\zerovec$ are ones and zeros, respectively. 
%For a vector $\avec$, ${\text{diag}}(\avec)$ is a diagonal matrix with the elements of $\avec$ on the diagonal.
The notations $\|\cdot\|_{\textbf{F}}$, $|\cdot|$, $\otimes$, $(\cdot)^{T}$, $(\cdot)^{-1}$, $(\cdot)^{\dagger}$, $|\cdot|_+$, and  $\text{Tr}(\cdot)$ 
denote the Frobenius norm, determinant operator, Kronecker product, transpose, inverse, Moore-Penrose pseudo-inverse, pseudo-determinant, 
%(i.e., the product of the nonzero eigenvalues of a square matrix), 
and trace, respectively. The sets
$\mathcal{S}^p$ and $\mathcal{S}^p_+$ are of real symmetric and real positive semi-definite  $p \times p$ matrices, respectively.
For matrix $\Amat\in\mathbb{R}^{p \times p}$, $\nabla_{\Amat}f\in \mathbb{R}^{p \times p}$ and $\nabla_{\Amat}^2f\in\mathbb{R}^{p^2 \times p^2}$ are the gradient and Hessian of the scalar function $f(\Amat)$. %\citep{hessian_gradient}. 

% Problem formulation: 
\section{PROBLEM FORMULATION}
\label{Prob_for_sec}
In this section, we formulate the considered  Laplacian learning problem.
We start with definitions related to GSP. Then, we present the LGMRF and formulate the graph estimation problem under the LGMRF and MLE in Subsection \ref{GMRF_subsection}. 
The sparsity-promoting penalty function is discussed in Subsection \ref{sparsity_subsection}.

We consider an undirected, connected, and weighted graph $\mathcal{G}\braro{\mathcal{V},\mathcal{E}}$, where 
$\mathcal{V} = \bracu{1,\dots,p}$ and $\mathcal{E} = \bracu{1,\dots,m}$
are the set of vertices and the set of edges, respectively. The nonnegative weighted adjacency matrix of the graph, $\Wmat$, has the $(i,j)$-th element $w_{i,j}> 0$ if vertex $i$ and $j$ are connected, and zero otherwise. 
From spectral graph theory \citep{chung1997spectral}, the rank of the Laplacian matrix for a connected graph with $p$ nodes is $p-1$. Thus, the set of  Laplacian matrices for connected graphs can be defined as \citep{ying2020nonconvex}
\beqna
\label{Lc}
    \mathcal{L}=\left\{
    \Lmat \in \mathcal{S}^p_+ \Big{|} \begin{array}{ll} 
    \Lmat_{i,j}\leq 0, \forall i \neq j,~i,j=1,\ldots,p\\ 
    \Lmat\onevec=\zerovec,~~{\textrm{rank}}\braro{\Lmat} = p-1
    \end{array}\right\}.
\eeqna

\subsection{GMRF and Constrained MLE}
\label{GMRF_subsection}
In this paper, we use the formulation of the graph estimation problem based on the probabilistic
graphical model \citep{koller2009probabilistic,banerjee2008model}.
We assume that the data samples are obtained from a zero-mean Gaussian distribution parametrized by a positive semi-definite precision matrix, $\Lmat$, i.e.,  $\xvec \sim \mathcal{N}\braro{\zerovec,\Lmat^\dagger}$, where $\mathcal{N}\braro{\boldsymbol{\mu},\boldsymbol{\Sigma}}$ denotes the normal distribution with mean $\boldsymbol{\mu}$ and covariance matrix $\boldsymbol{\Sigma}$.
This defines an improper LGMRF model \citep{kumar2019structured} with parameters $\braro{\zerovec,\Lmat}$, where the precision matrix is $\Lmat \in \mathcal{L}$ and $\mathcal{L}$ is defined in \eqref{Lc}. 
%The following definition formally describes the assumed model.
%\begin{definition}
%\label{LGMRF_def}
%A zero-mean random vector $\xvec\in \bracu{\xvec \in \mathbb{R}^p|\onevec^T\xvec = 0}$
%is an LGMRF with parameters $\braro{\zerovec,\Lmat}$,  where  $\Lmat \in \mathcal{L}$, if its %probability density function (pdf) is $\mathcal{N}\braro{\zerovec,\Lmat^\dagger}$.
%\end{definition}
%%%%%Thus,  we have an improper GMRF \citep{kumar2019structured}, where the precision 
%in LGMRF  is a Laplacian matrix.
%the pdf is a mapping from $\mathbb{U}^{p-1}$ to $\mathbb{R}$, i.e., $f_{\Lmat}: \mathbb{U}^{p-1}\mapsto \mathbb{R}$.
%In particular, the samples from the distribution satisfy the linear constraint $\onevec^T\xvec = 0$, and thus, this is
%Since it is assumed that $\Lmat \in \mathcal{L}$, 

Given $n$  independent and identically distributed (i.i.d.) samples $\xvec_1, \ldots, \xvec_n$ drawn from an LGMRF, our goal is to find the constrained MLE of $\Lmat$ based on these samples and
 under the Laplacian constraints. The MLE  can be found by solving
 the following constrained minimization of the negative log-likelihood \citep{ying2021minimax}:
\begin{equation}
\label{ML}
    \hat{\Lmat}_{MLE} =  
    \underset{\Lmat\in\mathcal{L}}{\text{argmin}}\bracu{\TR{\Lmat\Smat} - \log|\Lmat|_+},
\end{equation}
where $\Smat \define \frac{1}{n}\sum_{i=1}^{n}\xvec_i\xvec^{T}_i$ is the empirical covariance matrix.

%It is proved in  \citep{ying2021minimax} that the MLE $\hat{\Lmat}_{MLE}$ in \eqref{ML} exists and is unique almost surely for any sample size $n \geq 1$.
%\begin{IEEEproof}
%    The proof can be found in  \citep{ying2021minimax}.
%\end{IEEEproof}
%Note that a unique MLE for general GMRF models does not exist if the number of observations, $n$, is smaller than the number of variables, $p$. Hence, the existence and uniqueness of the MLE here are 
%special to the Laplacian-constrained version of the problem, which is useful in large-scale problems where the number of observations is usually less than the number of nodes.

Sparsity plays an important role in high-dimensional learning, which helps avoid over-fitting and improves the identification of the relationships among data, especially where $\Smat$ is low rank. In particular, the Laplacian is a sparse matrix in various applications, e.g. electrical networks \citep{Halihal_Routtenberg2022,grotas2019power}.
A sparse graph estimation problem under the LGMRF model can be formulated by adding a sparse penalty function to the estimator in \eqref{ML}, which results in 
\beqna 
\label{minimization_problem_3}
    \begin{aligned}
        & \underset{\Lmat\in\mathcal{L}}{\text{minimize}}
        &\TR{\Lmat\Smat} - \log|\Lmat|_+ +\rho\braro{\Lmat;\lambda},
    \end{aligned}
\eeqna
where $\rho\braro{\Lmat;\lambda}$ is a general sparsity-promoting penalty function and $\lambda$ is a tuning parameter.

The objective function in \eqref{minimization_problem_3} (or in \eqref{ML}) involves the pseudo-determinant term, since the target matrix $\Lmat$ is a singular matrix. %However, 
As a result, the optimization problem becomes hard to solve  
\citep{holbrook2018differentiating}. 
As a remedy to this problem,
it is demonstrated in \citep{egilmez2017graph} that if $\Lmat \in \mathcal{L}$, then we can use the relation
\be
\label{pseudoL}
|\Lmat|_+ = |\Lmat+\Jmat|,
\ee
where $\Jmat = \frac{1}{p}\onevec\onevec^T$.
By substituting \eqref{pseudoL} in \eqref{minimization_problem_3}, the problem is simplified.  
%rewrite the MLE of $\Lmat$ as the following equivalent penalized and constrained optimization problem:
%\beqna 
%\label{minimization problem 2}
%    \begin{aligned}
%        & \underset{\Lmat\in\mathcal{L}}{\text{minimize}}
%        &   \TR{\Lmat\Smat} -\log{|\Lmat+\Jmat|}
%        +\rho\braro{\Lmat;\lambda}.
%    \end{aligned}
%\eeqna
In addition, in the following theorem we show that the rank and the positive semi-definiteness constraints in \eqref{Lc} are redundant. The proof appears in Appendix \ref{sec:ProofThm1}.
\begin{theorem}
\label{Th1}
The optimization problem 
\begin{equation}
\begin{aligned}
\label{minimization problem 4}
    \underset{\Lmat\in \mathcal{F}}{\text{minimize}}
     \quad & \TR{\Lmat \Smat} -\log|\Lmat+\Jmat|
        +\rho\braro{\Lmat;\lambda},
\end{aligned}
\end{equation}
where the feasible set is given by  
\beqna \label{Lc - 9}
\mathcal{F}=\left\{
    \Lmat \in \mathcal{S}^p \Big{|} \begin{array}{ll} 
    \Lmat_{i,j}\leq 0, \forall i \neq j,~i,j=1,\ldots,p\\ \Lmat\onevec=\zerovec
    \end{array}\right\}.
   %\mathcal{F}\define \{\Lmat \in \mathcal{S}^p|\Lmat_{i,j}\leq 0, \forall i \neq j,~i,j=1,\ldots,p,\quad \Lmat\onevec=\zerovec\}
\eeqna
is equivalent to the 
optimization problem stated in \eqref{minimization_problem_3}.
\end{theorem}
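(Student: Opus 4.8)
The plan is to prove equivalence in the sense that the two problems share the same optimal value and the same set of minimizers. The starting observation is that $\mathcal{L}\subseteq\mathcal{F}$, since $\mathcal{F}$ is obtained from $\mathcal{L}$ by discarding the positive semi-definiteness requirement and the rank constraint; moreover, on $\mathcal{L}$ the two objectives coincide, because relation \eqref{pseudoL} gives $\log|\Lmat|_+=\log|\Lmat+\Jmat|$ for every $\Lmat\in\mathcal{L}$. Hence it suffices to show that the two discarded constraints are not genuinely lost: the semi-definiteness is automatically implied by the retained constraints, and the rank constraint is enforced implicitly by the log-determinant term.

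First I would show that every $\Lmat\in\mathcal{F}$ is automatically positive semi-definite, so that $\mathcal{F}\subseteq\mathcal{S}^p_+$. Indeed, symmetry together with $\Lmat_{i,j}\le 0$ for $i\neq j$ and $\Lmat\onevec=\zerovec$ forces the diagonal to satisfy $\Lmat_{i,i}=-\sum_{j\neq i}\Lmat_{i,j}=\sum_{j\neq i}|\Lmat_{i,j}|$, so $\Lmat$ is weakly diagonally dominant with nonnegative diagonal entries. By the Gershgorin circle theorem every eigenvalue is nonnegative, hence $\Lmat\succeq 0$. Consequently $\mathcal{F}$ and $\mathcal{L}$ differ only through the rank condition.

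The crux is to show that the barrier $-\log|\Lmat+\Jmat|$ silently enforces $\mathrm{rank}(\Lmat)=p-1$. Since $\Lmat\onevec=\zerovec$ and $\Jmat\onevec=\onevec$, the vector $\onevec$ is an eigenvector of $\Lmat+\Jmat$ with eigenvalue $1$; on the orthogonal complement $\onevec^{\perp}$ the rank-one term $\Jmat$ vanishes, so $\Lmat+\Jmat$ acts there exactly as $\Lmat$. Diagonalizing accordingly, $|\Lmat+\Jmat|=\prod_{k}\mu_k$, where $\mu_1,\dots,\mu_{p-1}$ are the eigenvalues of $\Lmat$ restricted to $\onevec^{\perp}$. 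Because $\Lmat\succeq 0$ with $\onevec$ in its kernel, these $\mu_k$ are nonnegative, and $|\Lmat+\Jmat|>0$ if and only if all of them are strictly positive, i.e.\ if and only if $\Lmat$ has rank exactly $p-1$; otherwise $|\Lmat+\Jmat|=0$ and the objective equals $+\infty$.

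Finally I would combine these facts. The Laplacian of any connected graph lies in $\mathcal{F}$ and yields a finite objective, so the infimum of \eqref{minimization problem 4} is finite. Therefore every minimizer must attain a finite objective, which by the previous step forces $|\Lmat+\Jmat|>0$ and hence $\mathrm{rank}(\Lmat)=p-1$; together with $\mathcal{F}\subseteq\mathcal{S}^p_+$ this places every minimizer of \eqref{minimization problem 4} inside $\mathcal{L}$. Since $\mathcal{L}\subseteq\mathcal{F}$ and the two objectives agree on $\mathcal{L}$, the optimal values and the argmin sets of \eqref{minimization problem 4} and \eqref{minimization_problem_3} coincide, which is the claimed equivalence. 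The main obstacle is the spectral analysis of the third paragraph: recognizing that adding $\Jmat$ converts the pseudo-determinant into an ordinary determinant whose strict positivity is precisely the rank-$(p-1)$ condition, so that no explicit rank constraint need be imposed.
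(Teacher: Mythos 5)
Your proof is correct and follows essentially the same route as the paper's: positive semi-definiteness is recovered from the diagonal dominance of matrices in $\mathcal{F}$, the spectral splitting of $\Lmat+\Jmat$ along $\mathrm{span}\{\onevec\}\oplus\onevec^{\perp}$ shows that nonsingularity of $\Lmat+\Jmat$ is equivalent to $\mathrm{rank}(\Lmat)=p-1$, and the $-\log|\cdot|$ barrier forces any minimizer to satisfy this. The only cosmetic differences are that you invoke Gershgorin explicitly where the paper cites a textbook, you argue directly from finiteness of the infimum rather than by contradiction, and you exhibit a connected-graph Laplacian to certify that the infimum is finite---a point the paper leaves implicit.
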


 \subsection{Sparsity Promoting Penalty Functions}
 \label{sparsity_subsection}
 %In this subsection we discuss the sparsity-promoting penalty function, $\rho\braro{\Lmat;\lambda}$,  from the optimization in \eqref{minimization problem 4}.
 Recent works like \citep{egilmez2017graph}
%\citep{egilmez2017graph,kumar2019structured,liu2019block,zhao2019optimization} 
 introduced the $\ell_1$-norm penalized MLE under the LGMRF model to estimate a sparse graph. 
 The choice of $\rho$ in \eqref{minimization problem 4} as $\|\Lmat\|_{1,\text{off}}$, the absolute sum of all off-diagonal elements of $\Lmat$, 
 %can be interpreted as the 
 results in the same objective function as in the well-known GLASSO problem 
but where the feasible set is restricted to $\Lmat\in \mathcal{F}$.
 However, unlike in GLASSO, under Laplacian constraints the $\ell_1$-norm is ineffective in promoting a sparse solution. In particular, it is shown by \citet{ying2020does} that 
   substituting $\rho\braro{\Lmat;\lambda}=\lambda\|\Lmat\|_{1,\text{off}}$ in the objective of \eqref{minimization problem 4} and
 choosing the penalty parameter $\lambda$ to be large enough 
 leads to a fully-connected (complete) graph as the solution. 
Moreover, \citet{ying2020does} show in their numerical experiments that the number of the estimated positive edges increases as  $\lambda$ increases. An intuitive explanation for this phenomenon is that the $\ell_1$-norm promotes sparsity uniformly on all off-diagonal entrances of the matrix. 
Hence, for $\lambda>0$, minimizing the $\ell_1$-norm separately with a positive definite constraint, as in GLASSO, leads to a diagonal matrix with a positive diagonal. However, unlike a regular GMRF model, the Laplacian constraints, $\Lmat\onevec = \zerovec$ and $\Lmat_{i,j}\leq 0,$ for $i\neq j$, relate the off-diagonal entries with the diagonal entries of $\Lmat$. Hence, minimizing the $\ell_1$-norm would lead to the zeros matrix.
%\begin{itemize}
%    \item In GLASSO:
%    \be 
%    \underset{\Lmat \succ 0}{\text{minimize}} \quad \lambda\|\Lmat\|_{1,\textrm{off}} = \Dmat,%\Imat_p.
%    \ee
%    where $\Dmat$ is a diagonal matrix with positive diagonal.
%    \item In GLASSO under Laplacian constraints:
%    \be
%        \underset{\Lmat\in\mathcal{L}}{\text{minimize}} \quad \lambda\|\Lmat\|_{1,\textrm{off}} = \textbf{0}_p.
%    \ee
%\end{itemize}
 Therefore, as $\lambda$ increases, the eigenvalues of $\Lmat$  get closer to zero, and the $-\log|\cdot|$ term in the objective function increases significantly. 
Since $\ell_1$-norm penalizes all off-diagonal elements in a uniform way, to prevent a significant increase of the objective function, the minimization process inserts relatively small (in absolute value) negative numbers, even for zero elements in the ground-truth Laplacian matrix. This way, both the $\ell_1$-norm and the expression $-\log|\cdot|$ in the optimization problem remain relatively small.

 %In addition, \citep{ying2020does} work supply a mathematical proof that if $\lambda$ is large enough, the solution of the $\ell_1$-norm regularized MLE under the Laplacian constrained Gaussian graphical model is fully connected graph. Moreover, it is shown by numerical experiments that the number of positive edges learned grows along with the increase of $\lambda$.
 
Alternative approaches using nonconvex penalties, such as clipped absolute deviation (SCAD) \citep{fan2001variable} and the minimax concave penalty (MCP) \citep{zhang2010nearly}, have been proposed to alleviate this issue. 
These penalty functions penalize the elements in the precision matrix more selectively than the $\ell_1$-norm, such that elements that receive significant values from the MLE will receive insignificant penalties, while elements closer to zero will receive a greater penalty.
In this paper, we use the MCP penalty function, which is applied on the off-diagonal entries of $\Lmat$ as follows:
\be
\label{r mcp}
\rho_{MCP}\braro{\Lmat;\lambda} = \sum\nolimits_{i\neq j}MCP\braro{\Lmat_{ij};\gamma,\lambda},
\ee
where the $MCP(\cdot)$ is a scalar function, defined as
\beqna
    \label{MCP}
    MCP(x;\gamma,\lambda)
    =
    \left\{\begin{array}{lr}
    \lambda|x| - \frac{x^2}{2\gamma} \quad \textrm{if}\quad |x| \leq \gamma\lambda\\
    \frac{1}{2}\gamma\lambda^2 \quad \quad \quad \textrm{if}\quad |x| > \gamma\lambda
    \end{array}\right..
\eeqna
The function is constant and has a derivative of 0 for values larger than $\gamma\lambda$ in magnitude, hence only elements smaller than $\gamma\lambda$ are affected by $\rho_{MCP}()$.
%The derivative of the MCP function in \eqref{MCP} is
%\beqna
%    \label{gMCP}
%    \frac{dMCP(x;\gamma,\lambda)}{dx}
%    \hspace{-0.1cm}=\hspace{-0.1cm}
%    \left\{\begin{array}{lr}
%    \lambda\cdot\text{sign}\braro{x} - \frac{x}{\gamma} &\textrm{if}\quad |x| \leq \gamma\lambda\\
%    0  &\textrm{if}\quad |x| > \gamma\lambda
%\end{array}\right..
%\eeqna
%Hence, only elements close to zero (smaller than $\gamma\lambda$) will be affected by this regularization.

To conclude, our goal in this paper is to develop an efficient algorithm for finding the constrained MLE of the Laplacian matrix, which is the precision matrix of the LGMRF model. This is done by solving the optimization problem in \eqref{minimization problem 4} with $\rho\braro{\Lmat;\lambda}=\rho_{MCP}\braro{\Lmat;\lambda}$, defined in \eqref{r mcp}. I.e., the problem we solve is given by:
 \beqna 
\label{minimization problem final}
    \begin{aligned}
        & \underset{\Lmat\in \mathcal{F}}{\text{minimize}}
        &   \TR{\Lmat\Smat} - \log|\Lmat+\Jmat|+\rho_{MCP}\braro{\Lmat;\lambda}\\
    \end{aligned},
\eeqna
where $\mathcal{F}$ is defined in \eqref{Lc - 9}.

%~~~~~~~~~~~~~~~~~~~~~~~~~~~~~~~~~~~~~~~~~~~~~~~~~~~~~
%\section{Proximal Newton method for graph Laplacian estimation (NewGLE)}
\section{METHOD}
\label{MCP_GLN_sec}
In this section, we develop the proposed proximal Newton method for the graph Laplacian estimation, named NewGLE, that aims to solve \eqref{minimization problem final} efficiently using a proximal Newton approach adopted to the problem. First, we define the constrained Newton optimization problem for the graph learning in Subsection \ref{prox newton_subsection}, keeping the Laplacian constraints when finding the Newton direction. Then, in Subsection \ref{parameterization_subsection}, we present the approach that maps a Laplacian matrix to a vector and simplifies the set of constraints. Next, we describe the idea of restricting the Newton direction into a ``free set" in Subsection \ref{free set_subsection}. 
This constitutes the NewGLE method, which is summarized in Algorithm 1.
Following that, we present our inner solver for finding the Newton direction, which is composed of the nonlinear projected conjugate gradient that is used together with a diagonal preconditioner, in Subsection \ref{CG_subsection} and Appendix \ref{sec:NLCG}. Finally, we provide an algorithm (Algorithm \ref{algorithm for Newton problem} in the Appendix) for finding the Newton direction.

\subsection{Proximal Newton for Graph Laplacian Estimation}
\label{prox newton_subsection}
In the ``proximal Newton” approach, a quadratic approximation is applied on the smooth part of the objective function, while leaving the non-smooth term intact, in order to obtain the Newton descent direction. This approach is considered to be among the state-of-the-art methods for solving the GLASSO problem \citep{rolfs2012iterative,hsieh2013big,hsieh2011sparse,mazumder2012exact,treister2016multilevel}, which is highly related to our problem in \eqref{minimization problem final}. The advantage of this method lies in the treatment of the $\log\det$ term (as we show next), which appears in both problems. However, the GLASSO methods cannot be applied to our graph learning problems because we aim to estimate a precision matrix that satisfies the Laplacian constraints (and is singular). In contrast, the learned precision matrix under the GMRF model in GLASSO is a general positive definite matrix. In addition, we consider the MCP penalty, while the above mentioned methods consider the $\ell_1$-norm penalty, which is not suitable for our case, as discussed in Subsection \ref{sparsity_subsection}.

In our proximal Newton approach we design a constrained Newton problem, where the smooth part of the objective function in \eqref{minimization problem final} is 
\be
\label{eq:smooth_f}
f\braro{\Lmat} = \TR{\Lmat \Smat} -\log{|\Lmat+\Jmat|},
\ee
and the penalty function, $\rho_{MCP}\braro{\Lmat;\lambda}$, is the non-smooth term. At the $t$-th iteration of the proximal Newton approach, the smooth part of the objective is approximated by a second-order Taylor expansion around the current estimation $\Lmat^{(t)}$.
%while the non-smooth term is left as is.
To this end, we use the gradient and Hessian of $f(\Lmat)$ that are given by \citep{hsieh2014quic,hessian_gradient}
\be
\label{grad and h}
    \nabla_{\Lmat} f = \Smat - \Qmat, \quad \nabla_{\Lmat}^2f = \Qmat\otimes\Qmat,
\ee
where $\otimes$ is the Kronecker product and $\Qmat=\braro{\Lmat+\Jmat}^{-1}$.

The gradient in \eqref{grad and h} already shows the main difficulty in solving the optimization problem in \eqref{minimization problem final}: it contains $\Qmat$, the inverse of the matrix $\Lmat +\Jmat$, which %may be dense even if $\Lmat$ is sparse, and thus 
is expensive to compute. The advantage of the proximal Newton approach for this problem is the low overhead: by calculating $\Qmat$ in $\nabla_{\Lmat} f(\Lmat)$, we also get part of the Hessian computation at the same cost.

In the same spirit of the proximal Newton approach, in addition to the penalty term, we also keep the Laplacian constraint intact in the Newton problem. That is, at iteration $t$, the Newton direction $\Delta^{(t)}$ 
solves the constrained penalized quadratic minimization problem:
\begin{equation}
\begin{aligned}
\label{Newton Problem}
    \underset{\Delta \in \mathcal{S}^{p}}{\text{minimize}}
     \quad & f(\Lmat^{(t)})+ \TR{\Delta\braro{\Smat - \Qmat}} + \frac{1}{2}\TR{\Delta\Qmat\Delta\Qmat}\\
     \quad & + \frac{1}{2}\|\varepsilon\cdot\Delta\|_{F}^2 +\rho_{MCP}(\Lmat^{(t)}+\Delta;\lambda)\\
    \textrm{s.t.}
    \quad & \Lmat^{(t)}+\Delta \in \mathcal{F},
\end{aligned}
\end{equation}
where $\Qmat=\braro{\Lmat^{(t)}+\Jmat}^{-1}$. 
It should be noted that for the sake of simplicity, we use the notation 
$\Qmat$  instead of $\Qmat^{(t)}$, although $\Qmat$ is updated with the iteration $t$.
Note that the gradient and Hessian of $f(\cdot)$
in \eqref{grad and h} at the iterate $\Lmat^{(t)}$ are featured in the second and third terms in \eqref{Newton Problem}, respectively. The first term of \eqref{Newton Problem} is constant and can be ignored. Altogether, the objective function in \eqref{Newton Problem} is quadratic with the MCP sparsity-promoting penalty. In addition, we have a constraint to guarantee that the next update of $\Lmat$ is feasible, i.e., $ \braro{\Lmat^{(t)}+\Delta} \in \mathcal{F}$. 
The fourth term, with a small and symmetric weight matrix $\varepsilon \geq 0$ penalizes the squares of the off-diagonals of $\Delta$. This term is used for the stabilization of the iterative solution of \eqref{Newton Problem}, and to make sure that the Hessian of \eqref{Newton Problem} is positive definite, since $\rho_{MCP}$ is concave---more details are given later on in the analysis in Appendix \ref{sec10:decrease}. The advantage of solving \eqref{Newton Problem} with $\rho_{MCP}$ intact, as opposed to majorizing $\rho_{MCP}$ (e.g., using its gradient instead) is that the underlying gradient-based iterative solver of \eqref{Newton Problem} has access to the curvature of $\rho_{MCP}$ and to its behavior in the sub-sections in its definition \eqref{MCP}.

Once the Newton problem is approximately solved, yielding the direction $\Delta^{(t)}$, it is added to $\Lmat^{(t)}$ employing a linesearch procedure to sufficiently reduce the objective in \eqref{minimization problem final}. To this end, the updated iterate is 
\begin{equation}
\label{eq:linesearch}
\Lmat^{(t+1)} = \Lmat^{(t)} + \alpha^{(t)} \Delta^{(t)},
\end{equation}
where the parameter $\alpha^{(t)}$ is obtained by  a backtracking linesearch using Armijo’s rule \citep{armijo1966minimization}. We state the following theorem on such a proximal quadratic solution process, where we approximate the true Hessians by SPD matrices $\Mmat^{(t)}$ whose eigenvalues are bounded by $\lambda^{\Mmat}_{min}$ and $\lambda^{\Mmat}_{max}$ from below and above, respectively. The proof appears in Appendix \ref{app:convergence}.
\begin{theorem}\label{Thm2}
Let $\{\Lmat^{(t)}\}$ be a series of points produced by a sequence of proximal quadratic minimizations, constrained by $\mathcal{F}$, with some SPD matrices $0\prec\lambda^{\Mmat}_{min}\Imat\preceq \Mmat^{(t)} \preceq \lambda^{\Mmat}_{max}\Imat$ as the approximate Hessian (as in \eqref{eq:minimization problem G}), followed by the linesearch \eqref{eq:linesearch}, starting from $\Lmat^{(0)}\in\mathcal{F}$. Then any limit point $\bar \Lmat$ of $\{\Lmat^{(t)}\}$ is a stationary point of \eqref{minimization problem final}.
\end{theorem}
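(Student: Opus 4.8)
The plan is to treat the iteration as a variable-metric (scaled proximal-gradient) descent method on the composite objective $F(\Lmat) = f(\Lmat) + \rho_{MCP}(\Lmat;\lambda)$ over the feasible set $\mathcal{F}$ of \eqref{Lc - 9}, and to follow the standard three-part template: the Newton direction is a genuine descent direction of $F$; the Armijo linesearch \eqref{eq:linesearch} produces a summable decrease; and every limit point satisfies the first-order stationarity condition $0 \in \nabla_{\Lmat} f(\bar\Lmat) + \partial \rho_{MCP}(\bar\Lmat) + N_{\mathcal{F}}(\bar\Lmat)$, where $N_{\mathcal{F}}$ denotes the normal cone to $\mathcal{F}$. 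The organizing observation is the fixed-point characterization: a point $\bar\Lmat\in\mathcal{F}$ is stationary for \eqref{minimization problem final} if and only if the constrained quadratic subproblem built at $\bar\Lmat$ (with any admissible Hessian) has $\Delta=0$ as its minimizer. The whole argument then reduces to proving $\Delta^{(t)}\to 0$ along a convergent subsequence.

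First I would establish that each subproblem is strongly convex, hence has a unique solution $\Delta^{(t)}$. This is exactly where the regularization $\tfrac12\|\varepsilon\cdot\Delta\|_{F}^2$ in \eqref{Newton Problem} enters: since $MCP$ in \eqref{MCP} is $\tfrac1\gamma$-weakly convex (the map $x\mapsto MCP(x;\gamma,\lambda)+\tfrac{1}{2\gamma}x^2$ is convex), tuning the weight so that $\lambda^{\Mmat}_{min} > \tfrac1\gamma$ makes the subproblem objective $\langle\nabla_{\Lmat} f(\Lmat^{(t)}),\Delta\rangle + \tfrac12\langle\Delta,\Mmat^{(t)}\Delta\rangle + \rho_{MCP}(\Lmat^{(t)}+\Delta)$ strongly convex. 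Comparing its value at $\Delta^{(t)}$ with its value at $\Delta=0$ yields the model decrease $\langle\nabla_{\Lmat} f,\Delta^{(t)}\rangle + \rho_{MCP}(\Lmat^{(t)}+\Delta^{(t)}) - \rho_{MCP}(\Lmat^{(t)}) \le -\tfrac12\langle\Delta^{(t)},\Mmat^{(t)}\Delta^{(t)}\rangle \le -\tfrac{\lambda^{\Mmat}_{min}}{2}\|\Delta^{(t)}\|_{F}^2$. Bounding the directional derivative of $\rho_{MCP}$ by this same finite difference plus the weak-convexity correction $\tfrac{1}{2\gamma}\|\Delta^{(t)}\|_{F}^2$, I get $F'(\Lmat^{(t)};\Delta^{(t)}) \le -\tfrac12\big(\lambda^{\Mmat}_{min}-\tfrac1\gamma\big)\|\Delta^{(t)}\|_{F}^2 < 0$ whenever $\Delta^{(t)}\neq 0$, so $\Delta^{(t)}$ is a strict descent direction.

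Next I would argue well-posedness of the linesearch and a sufficient decrease. Because $f$ is continuously differentiable wherever $\Lmat+\Jmat\succ0$ and $\rho_{MCP}$ is directionally differentiable, the strictly negative directional derivative guarantees the Armijo condition for all small enough $\alpha$, so backtracking terminates with $\alpha^{(t)}>0$ and $F(\Lmat^{(t+1)}) \le F(\Lmat^{(t)}) - c\,\alpha^{(t)}\|\Delta^{(t)}\|_{F}^2$ for a fixed $c>0$. Coercivity of $F$ on $\mathcal{F}$ (the barrier $-\log|\Lmat+\Jmat|$ keeps $\Lmat+\Jmat$ away from singularity while $\TR{\Lmat\Smat}$ controls the magnitude and $\rho_{MCP}$ is bounded) confines the iterates to a compact level set and bounds $F$ below; monotonicity then gives $\sum_t \alpha^{(t)}\|\Delta^{(t)}\|_{F}^2 < \infty$, so $\alpha^{(t)}\|\Delta^{(t)}\|_{F}^2\to 0$ and $\{\Delta^{(t)}\}$ is bounded.

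The main obstacle is the classical gap between $\alpha^{(t)}\|\Delta^{(t)}\|_{F}^2\to0$ and the desired $\|\Delta^{(t)}\|_{F}\to0$, which I would close with the standard two-case argument along a subsequence $\Lmat^{(t_k)}\to\bar\Lmat$. If $\liminf_k\alpha^{(t_k)}>0$, then $\|\Delta^{(t_k)}\|_{F}\to0$ immediately. Otherwise $\alpha^{(t_k)}\to0$, so for large $k$ backtracking rejected the trial step $\alpha^{(t_k)}/\beta$; writing out the violated Armijo inequality, dividing by the step size, and passing to the limit (using boundedness of $\{\Delta^{(t_k)}\}$, continuity of $\nabla_{\Lmat} f$, and the directional-derivative structure of $\rho_{MCP}$) forces the limiting direction to have nonnegative directional derivative, contradicting the strict-descent bound unless it is $0$; one then checks $\Delta^{(t_k)}\to0$. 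Finally I would pass to the limit in the subproblem optimality condition $0 \in \nabla_{\Lmat} f(\Lmat^{(t_k)}) + \Mmat^{(t_k)}\Delta^{(t_k)} + \partial\rho_{MCP}(\Lmat^{(t_k)}+\Delta^{(t_k)}) + N_{\mathcal{F}}(\Lmat^{(t_k)}+\Delta^{(t_k)})$: since $\Mmat^{(t_k)}\preceq\lambda^{\Mmat}_{max}\Imat$ is bounded, the term $\Mmat^{(t_k)}\Delta^{(t_k)}\to0$, and outer semicontinuity of the limiting subdifferential of $\rho_{MCP}$ and of $N_{\mathcal{F}}$ yields $0\in\nabla_{\Lmat} f(\bar\Lmat)+\partial\rho_{MCP}(\bar\Lmat)+N_{\mathcal{F}}(\bar\Lmat)$, i.e., $\bar\Lmat$ is stationary. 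Notably this last step needs no convergence of $\Mmat^{(t_k)}$ itself, only its boundedness, which is precisely why the two-sided spectral bounds on $\Mmat^{(t)}$ in the hypothesis suffice.
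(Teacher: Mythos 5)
Your proposal is correct and follows the same skeleton as the paper's proof: the fixed-point characterization (your opening observation is exactly the paper's Lemma~\ref{lem:fixed_point}), a sufficient-decrease estimate whose validity hinges on the $\tfrac{1}{\gamma}$-weak convexity of the MCP and the condition $\lambda^{\Mmat}_{min}>\gamma^{-1}$ (the paper's Lemma~\ref{lem:decrease} imposes precisely this), compactness of the level set via the barrier and the trace term (Lemma~\ref{lem:bounds}, which the paper justifies under the high-probability assumption $\min_k[\mathcal{P}^*\Smat]_k>0$ --- your parenthetical ``$\TR{\Lmat\Smat}$ controls the magnitude'' silently needs this same assumption), and passage to the limit in the subproblem optimality conditions along a convergent subsequence (Lemma~\ref{lem:subseries}). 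The one place where you take a genuinely different route is the step-size analysis. The paper compares the subproblem value at $\Delta^{(t)}$ with its value at $\alpha\Delta^{(t)}$, expands $F$ to second order using the Hessian upper bound $\nabla^2 f\preceq\theta\Imat$ from Lemma~\ref{lem:bounds}, and obtains the explicit estimate $F(\Lmat^{(t)}+\alpha\Delta^{(t)})-F(\Lmat^{(t)})\leq-\alpha\bigl(\lambda^{\Mmat}_{min}-\gamma^{-1}-\alpha\theta\bigr)\|\Delta^{(t)}\|_F^2$, which yields a uniform lower bound $\alpha^{(t)}\geq\alpha_{min}>0$; consequently $\|\Delta^{(t_j)}\|_F\to 0$ follows immediately from the telescoping decrease, and no case analysis is ever needed. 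You instead compare against $\Delta=0$, work with directional derivatives, and close the gap between $\alpha^{(t)}\|\Delta^{(t)}\|_F^2\to 0$ and $\|\Delta^{(t)}\|_F\to 0$ by the classical two-case backtracking argument (either $\liminf\alpha^{(t_k)}>0$ or a rejected trial step yields a contradiction in the limit). Your route buys independence from the global Hessian bound in the linesearch analysis (you only invoke continuity of $\nabla f$), at the cost of the more delicate rejected-step limit argument, whose ``one then checks $\Delta^{(t_k)}\to 0$'' step requires care with the nonsmooth nonconvex penalty; the paper's route buys a quantitative, uniform step size for free from the spectral bounds it must establish anyway for compactness. Your normal-cone formulation of stationarity is equivalent to the paper's explicit multiplier conditions \eqref{eq:sub_zero_F}, and your observation that only boundedness of $\Mmat^{(t_k)}$ (not convergence) is needed in the final limit is also implicit in the paper's argument.
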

The theorem shows that if we choose $\Mmat^{(t)}$ such that $\lambda_{min}^{\Mmat} > \gamma^{-1}$, then the method monotonically converges to a stationary point of \eqref{minimization problem final}. Note that $\Mmat^{(t)}$ are matrices we choose for the solution, and are not part of the problem, so this requirement can easily be fulfilled. In the next section, we simplify problem \eqref{Newton Problem} to ease the treatment of the constraints. 

\subsection{Laplacian parameterization}
\label{parameterization_subsection}
To simplify the Laplacian structural constraints in \eqref{Newton Problem}, we use a linear operator that maps a vector $\wvec \in \mathbb{R} ^{p(p-1)/2}$ to a matrix $\Lx{\wvec}\in \mathbb{R}^{p\times p}$. 
\begin{definition}
\label{DEF2}
   The linear operator $\mathcal{P}: \mathbb{R} ^{p(p-1)/2} \mapsto \mathbb{R}^{p\times p}$ is defined as \citep{ying2020does}
   \beqna
    \label{Lx matrix}
    \Lx{\wvec}_{i,j} =
    \left\{\begin{array}{lr}
    -w_k & i>j\\
     \Lx{\wvec}_{j,i} & i<j\\
     -\sum_{j\neq i} \Lx{\wvec}_{i,j} & i=j
    \end{array},\right.
\eeqna
where $k = i-j + \frac{j-1}{2}\braro{2p-j}$ for $i>j$.
\end{definition}
The operator $\mathcal{P}$ defines a linear mapping from a {\em{nonnegative}} weight vector
$\wvec\geq 0$ to a Laplacian matrix $\Lx{\wvec}\in\mathcal{F}$, defined in \eqref{Lc - 9}.
Using this parametrization, we rewrite the optimization problem in \eqref{Newton Problem} such that the decision variable is a vector instead of a matrix. 
In particular, in Appendix \ref{sec:paramNewton}, 
we show that \eqref{Newton Problem} is equivalent to 
the following simpler vector-parameterized Newton problem that we solve 
\beqna
\begin{aligned}
    \underset{\delta\in \mathbb{R} ^{p(p-1)/2}}{\text{minimize}}
     \quad & 
     f_{N}(\delta) \define \TR{\Lx{\delta}\braro{\Smat - \Qmat}}\;  \\ \quad & 
      +\frac{1}{2}\TR{\Lx{\delta}\Qmat\Lx{\delta}\Qmat} +\|\tilde\varepsilon\odot\delta\|_2^2\\\quad & + 2\tilde{\rho}_{MCP}\braro{\wvec+\delta;\lambda}\\
    \textrm{s.t.}
    \quad & \delta \geq -\wvec, \label{Newton Problem 2}
\end{aligned}
\eeqna
where $\tilde\varepsilon\in\mathcal{R}^{p(p-1)/2}$ is the vectorized upper triangle of $\varepsilon$ (without the diagonal) and $\tilde{\rho}_{MCP}$ is given by
\be
\label{rho_def}
\tilde{\rho}_{MCP}\braro{{\wvec};\lambda} \define \sum_{k=1}^{p\braro{p-1}/2}MCP\braro{w_{k};\gamma,\lambda}
\ee
and $ MCP(x;\gamma,\lambda)$ is defined in \eqref{MCP}.
The only remaining constraint in \eqref{Newton Problem 2} is an inequality constraint, which simplifies the projection onto the feasible space.

\subsection{Restricting the updates to free sets}
\label{free set_subsection}
To ease the minimization of \eqref{minimization problem final} we limit the minimization of the Newton problem to a ``free set'' of variables,
while keeping the rest as zeros. This idea was suggested in \citep{hsieh2011sparse} for the GLASSO problem that includes the $\ell_1$-norm penalty. The free set of a matrix $\Lmat$ is defined as
\beqna
\label{free set}
\textbf{Free}\braro{\Lmat} = \bracu{k: \Lmat_{i,j}\neq0 \lor \brasq{\Smat - \Qmat}_{i,j}>\lambda },
\eeqna
where $\lor$ denotes the logistic OR operator and
the relation between $(i,j)$ and $k$ appears in Definition \eqref{DEF2}.
%\be
%\label{k index}
%k = i-j+\frac{j-1}{2}\braro{2N-j}, \quad i>j.
%\ee
This set comes from 
the subgradient of the $\ell_1$-norm term, in addition to the non-positivity constraint---the zero elements that are not in $\textbf{Free}\braro{\Lmat}$ will remain zero after a (projected) coordinate descent update. Because the subgradient of the MCP penalty is identical to that of $\ell_1$-norm at zero, the free set is defined in the same way as for the $\ell_1$-norm. As $\Lmat^{(t)}$ approaches a solution $\bar\Lmat$, $\textbf{Free}\braro{\Lmat^{(t)}}$ approaches $\bracu{(i,j): \bar\Lmat_{i,j} = 0}$.
Restricting \eqref{Newton Problem 2} to the free set variables improves the Hessian's condition number and decreases the number of iterations needed to solve \eqref{Newton Problem 2}.
%the computational complexity: given the matrix $\Qmat$, the Hessian  term in \eqref{Newton Problem 2}, $\frac{1}{2}\TR{\Lx{\delta}\Qmat\Lx{\delta}\Qmat}$, can be calculated in $\mathcal{O}\braro{Kp}$ operations instead of $\mathcal{O}\braro{p^3}$, where
%$K = |\textbf{Free}\braro{\Lmat^{(t)}}|$. Hence, to solve \eqref{minimization problem final} effectively, we can restrict the decision variable of the Newton problem in \eqref{Newton Problem 2} to $\textbf{Free}\braro{\Lmat^{(t)}}$ and save computations. 
%Our experiments have verified that this restriction to the free set decreases the number of iterations needed for convergence for our problem as well. 

So far, we have discussed the outer iteration of the proposed method, which is summarized in Algorithm \ref{alg:OuterAlg}.

\begin{algorithm}[hbt]
    \SetAlgoLined
    \KwInput{Empirical covariance matrix $\Smat$}
            
    \KwResult{Estimated Laplacian matrix $\hat{\Lmat}$}
    
    \KwInit{  Set $\Lmat^{(0)} \in \mathcal{L}$}
    
    \While{Stopping criterion is not achieved}{
         \begin{itemize}[noitemsep,topsep=0.25pt]
            \item Compute $\Qmat = \braro{\Lmat^{(t)} + \Jmat}^{-1}$.\\
            \item Define $\textbf{Free}\braro{\Lmat^{(t)}}$ as in \eqref{free set}.\\
            \item Find the Newton direction $\delta^{(t)}$ by solving \\ \eqref{Newton Problem 2} restricted to $\textbf{Free}\braro{\Lmat^{(t)}}$. \% \textit{by Alg. \ref{algorithm for Newton problem}}\\
            \item  Update: $\Lmat^{(t+1)} = \Lmat^{(t)} + \alpha \Lx{\delta^{(t)}}$, where $\alpha$\\ is achieved by linesearch.
         \end{itemize}
        }
\caption{NewGLE}
\label{alg:OuterAlg}
\end{algorithm}

\subsection{Solution of the Newton problem}
\label{CG_subsection}
To get the Newton direction for the $t$-th iteration, we approximately solve the quadratic, linearly constrained and MCP-regularized problem in \eqref{Newton Problem 2} using an iterative method. We use the nonlinear conjugate gradient (NLCG) method and modify it for our case. Specifically, we use the variant by \citet{dai1999nonlinear}, and since \eqref{Newton Problem 2} is linearly constrained we add projections between the iterations. Moreover, to speed up the convergence, we use NLCG together with a diagonal preconditioner, using the diagonal of the Hessian, as suggested in \citep{zibulevsky2010l1} for the LASSO problem.

The projected and preconditioned NLCG algorithm is the inner and most computationally expensive part of Algorithm \ref{alg:OuterAlg}. The algorithm itself and its details are given in Appendix \ref{sec:NLCG}. %This algorithm 
A key element of the approach is the diagonal preconditioner $\Dmat$, which is applied by replacing the gradient $\braro{\nabla_{\delta} f_N}$ in the search direction by $\Dmat^{-1}\cdot\nabla_{\delta} f_N$. The matrix $\Dmat$
is defined as the diagonal of the Hessian of $f_N(\delta)$, i.e. 
\begin{equation}
\Dmat_{k,k} = \braro{\brasq{\Qmat}_{i,i}+\brasq{\Qmat}_{j,j}-2\brasq{\Qmat}_{i,j}}^2 + \tilde\varepsilon_{k}^2,
\end{equation}
where the relation between $(i,j)$ and $k$ appears in Definition \ref{DEF2}. For the full derivation of the Hessian diagonal, please refer to Appendix \ref{sec:NLCG}. Note that the operator $\Dmat$ is diagonal, so calculating its inverse is trivial.

\section{MORE RELATED WORKS}
\label{related_works_sec}
\textbf{GMRF estimation with $\ell_1$ regularization.}
The $\ell_1$ regularized MLE under the GMRF model has been extensively studied recently (see   \citep{d2008first,friedman2008sparse,rolfs2012iterative,hsieh2013big,hsieh2011sparse,treister2016multilevel,shalom2022pista} and references therein). Some of these methods include a proximal Newton approach. However, these methods cannot be directly applied to the graph learning problem, as the goal here is to learn a Laplacian precision matrix as opposed to a general positive definite matrix. 

\textbf{The $\ell_1$ regularization bias and its alternatives.} One drawback of the mentioned approaches is that the $\ell_1$ penalty causes an estimation bias, which is relevant to several variable selection problems, e.g., least squares minimization and precision estimation under $\ell_1$ priors. To reduce the $\ell_1$ bias, two-stage methods, nonconvex penalties, and hard-thresholding approaches have been introduced for least squares variable selection \citep{zhang2010nearly, breheny2011coordinate, loh2013regularized}  and GMRF estimation \citep{chen2018covariate,lam2009sparsistency,shen2012likelihood,finder2022effective}. 
Several of these methods use nonconvex penalties, such as SCAD, MCP, and capped $\ell_1$-norm  \citep{zhang2010analysis}. Different algorithms have been proposed to solve the resulting nonconvex problems, e.g., minimization-maximization \citep{hunter2005variable}, local quadratic and linear approximations \citep{fan2001variable,zou2008one}.%, which were applied for regularized LS problems.

% From here it's for LGMRF
%L1
\textbf{
LGMRF estimation.} The recent works \citep{egilmez2017graph,kumar2019structured,liu2019block,zhao2019optimization} investigate the $\ell_1$-norm penalized MLE under the LGMRF model to estimate a graph Laplacian from data. However, it has been shown recently that imposing an $\ell_1$-norm penalty to the Laplacian-constrained MLE produces an unexpected behavior: the number of nonzero graph weights grows as the regularization parameter increases \citep{ying2020does}. 

To alleviate this issue,
nonconvex penalties, such as
the MCP, have been proposed together with LGMRF estimation.
These methods are first-order (i.e., gradient-based) methods that do not take Hessian information into account, and, thus, are expected to converge slower than second-order methods.
For example, the work of \citet{koyakumaru2022learning} suggests adding Tikhonov regularization to convexify the cost function and use the primal-dual splitting method 
to solve the optimization problem. However, as the Tikhonov regularization parameter increases, the bias of the estimation increases as well. Another approach in \citep{ying2020does,ying2020nonconvex,vieyra2022robust} uses a nonconvex estimation method by solving a sequence of weighted $\ell_1$-norm penalized subproblems, according to a majorization-minimization framework. In each majorization step, the $\ell_1$-norm weights are updated, and then in the minimization step, the $\ell_1$-penalized function is minimized by the projected gradient descent (PGD) method.  
The approach in \citep{ying2021minimax,tugnait2021sparse} requires solving two subproblems:
The first is an initial estimator of the precision matrix, e.g., the MLE. Then, the estimation is used to define the $\ell_1$ weights for the second subproblem. Both stages are solved by PGD. %Lastly, the works \citep{koyakumaru2022efficient} and \citep{vieyra2022robust} extended \citep{koyakumaru2021graph} and \citep{ying2020nonconvex} respectively, by adding a robust adaptation approach to deal with outliers. 
%A similar approach can be used with our proposed solver as well.

All the abovementioned methods are gradient-based methods that do not consider Hessian information.
On the other hand, the experience with the closely related GMRF estimation problem suggests that proximal Newton approaches would result in superior computational efficiency compared to those methods. %CONSIDER REMOVING:  Our goal in this paper is to develop a proximal Newton approach for Laplacian graph learning under non-convex penalties, leveraging the advantages of this approach.

\section{EXPERIMENTAL RESULTS}
\label{Simulations_sec}
In this section, we present experimental results for the evaluation of the proposed method. We define the experimental setup in Subsection \ref{experimental setup_subsection}. Then, we present the results in terms of graph learning performance and computational efficiency in Subsections \ref{ResultsGL_subsection} and \ref{ResultsCC_subsection}, respectively. Our MATLAB code is publicly available on GitHub at 
\url{https://github.com/BGUCompSci/GraphLaplacianEstimationProxNewton}. 

\subsection{Experimental setup}
\label{experimental setup_subsection}
We compare the performance of the following methods:
\begin{itemize}[noitemsep,topsep=0.25pt]
    \item MLE, defined in \eqref{ML}, where the objective function does not include any sparsity-promoting penalty.
    \item Nonconvex Graph Learning (NGL) with MCP \citep{ying2020nonconvex}, which solves a sequence of weighted $\ell_1$-norm penalized subproblems via the majorization-minimization framework. 
    \item Adaptive Laplacian-constrained Precision matrix Estimation (ALPE)  \citep{ying2021minimax}, which is based on a weighted $\ell_1$-norm-regularized MLE.  The weights are precalculated using the MLE. 
    \item PGD for solving \eqref{minimization problem final} using the Projected Gradient Descent method. We developed this method to demonstrate the advantages of using the second-order approximation.  
    \item The proposed method (NewGLE) for the MCP-regularized MLE, which is summarized in Algorithm \ref{alg:OuterAlg} and Algorithm \ref{algorithm for Newton problem} (in Appendix \ref{sec:NLCG}). We used $\gamma=1.01$ and $\varepsilon = 0$, and to start the solution we applied a few PGD iterations, which were taken into account in the results.
\end{itemize}
 %In addition, in the NewGLE method, we started with a few PGD iterations, which improved the graph learning performance and were taken into account in the results that show the computational complexity. 
The regularization parameter, $\lambda$, is fine-tuned for each method for the best performance. 
 %%%%%
 %Furthermore, it is possible to combine robust adaptation in the proposed method, as described in \citep{koyakumaru2022efficient,vieyra2022robust}. 
%However, this paper's scope is the algorithm efficiency, so to compare the effectiveness of the described methods, we do not add the adaptation. 
In addition, all algorithms use the convergence criterion $\frac{\|\Lmat^{(t+1)}-\Lmat^\textrm{(t)}\|_\textrm{F}}{\|\Lmat^{(t+1)}\|_\textrm{F}}\leq \epsilon$ with the tolerance $\epsilon = 10^{-4}$. All the experiments were conducted on a machine with 2 Intel Xeon E5-2660 2.0GHz processors with 28 cores and 512GB RAM.

We use the relative error (RE) and F-score (FS) to evaluate the performance of the algorithms, where
$
\textrm{RE}=\frac{\|\bar{\Lmat} - \Lmat^*\|_{\textrm{F}}}{\|\Lmat^*\|_{\textrm{F}}}$
in which $\bar{\Lmat}$ and $\Lmat^*$ denote the estimated and true precision matrices, respectively,
and $\textrm{FS} = \frac{2\textrm{tp}}{2\textrm{tp}+\textrm{fp} +\textrm{fn}}$,
where the true-positive (tp), false-positive (fp), and false-negative (fn) detection of graph edges are calculated by comparing the supports of $\bar{\Lmat}$ and  $\Lmat^*$. The F-score takes values in $[0, 1]$, where $1$ indicates perfect recovery of the support.

%%\label{Datasets_subsection}
We generated several datasets based on different graph-based models. %then, the algorithms are used to recover the underlying graphs from the generated data. 
To create the datasets, we first created a graph, and then its associated Laplacian matrix, $\Lmat^*$, is used to generate independent data samples from the Gaussian distribution $\mathcal{N}\braro{\zerovec,(\Lmat^*)^\dagger}$. %with the pdf given in \eqref{pdf}. 
The graph connectivity is based on two options: 1) a random planar graph consisting of $p = 1,000$ nodes \citep{treister2016multilevel}, and 2) a random Barabasi-Albert graph \citep{zadorozhnyi2012structural} of degree 2 with $p = 100$ nodes. The edge weights for both graph models are uniformly sampled from the range $[0.5,2]$. 
The curves in Figs. \ref{fig: RE and vs time p = 1000} and \ref{fig: time vs nDp} are the results of an average of 10 Monte Carlo realizations, and  Fig. \ref{ RE and vs time p = 100} shows the results of an average of 50 %Monte Carlo 
realizations. The standard deviation of all plots in these figures is less than $0.002$, which is also approximately the gap between the two leading methods.

\subsection{Results - graph learning performance}
\label{ResultsGL_subsection}
\begin{figure}[hbt]
    \centering
    \subfloat[\centering]{{\includegraphics[width=0.25\textwidth]{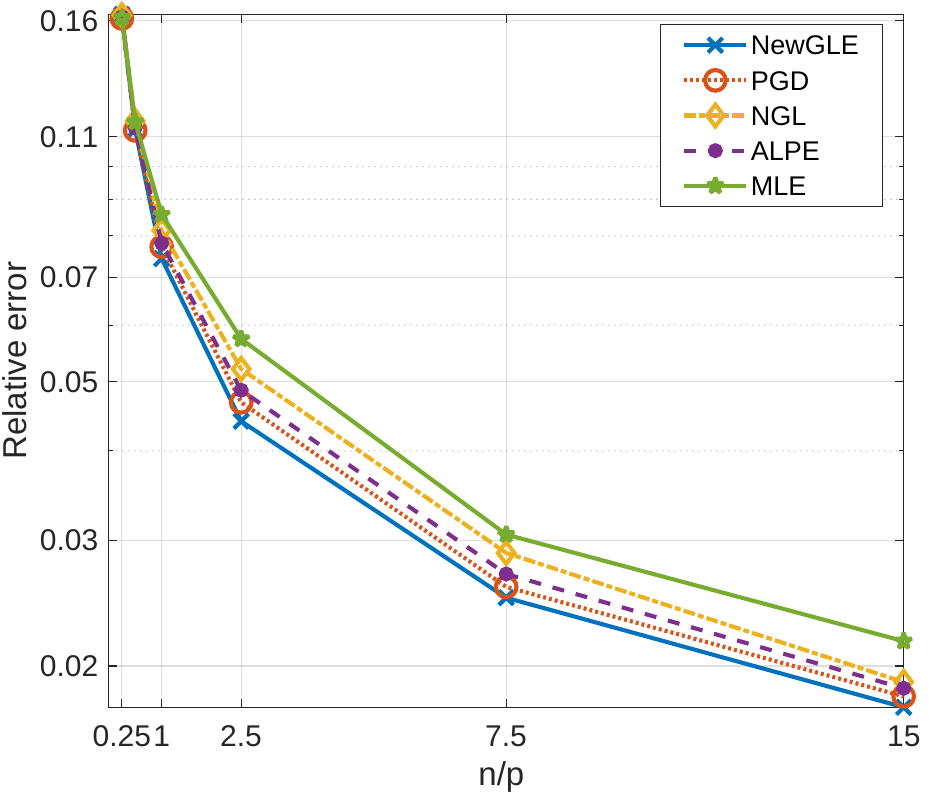} }}%
    \subfloat[\centering]{{\includegraphics[width=0.25\textwidth]{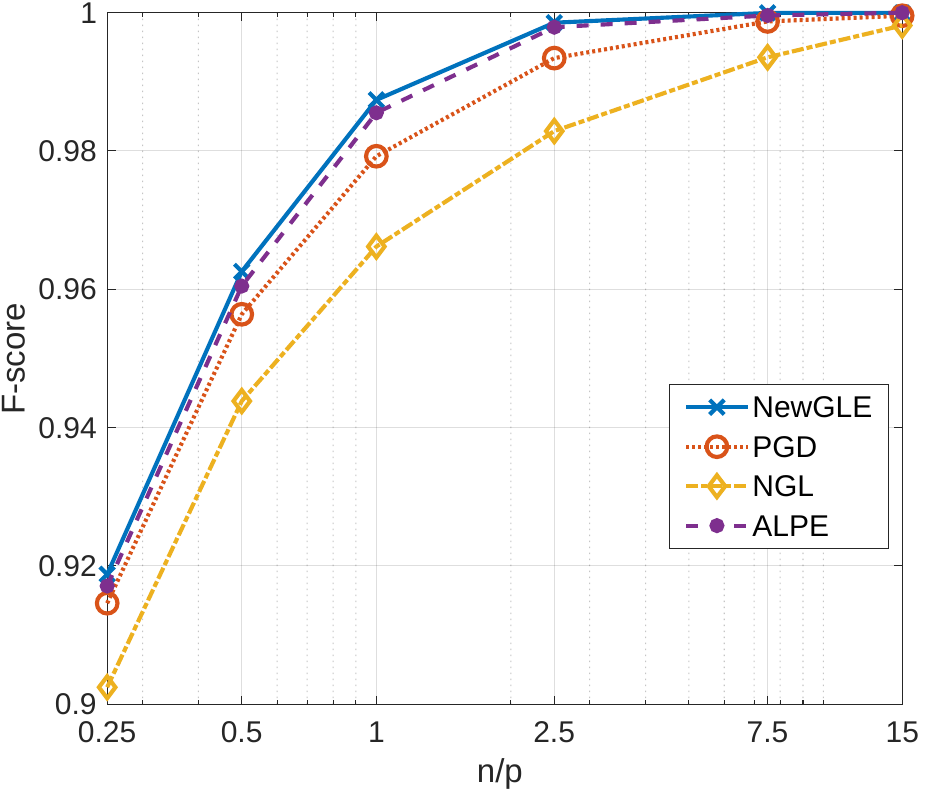} }}
    \caption{Performance comparisons under (a) RE, (b) F-score with different sample size ratios $n/p$  for
    planar graphs with 1,000 nodes.}
    \label{fig: RE and vs time p = 1000}
\end{figure}

\begin{figure}[hbt]
    \centering
    \subfloat[\centering]{{\includegraphics[width=0.245\textwidth]{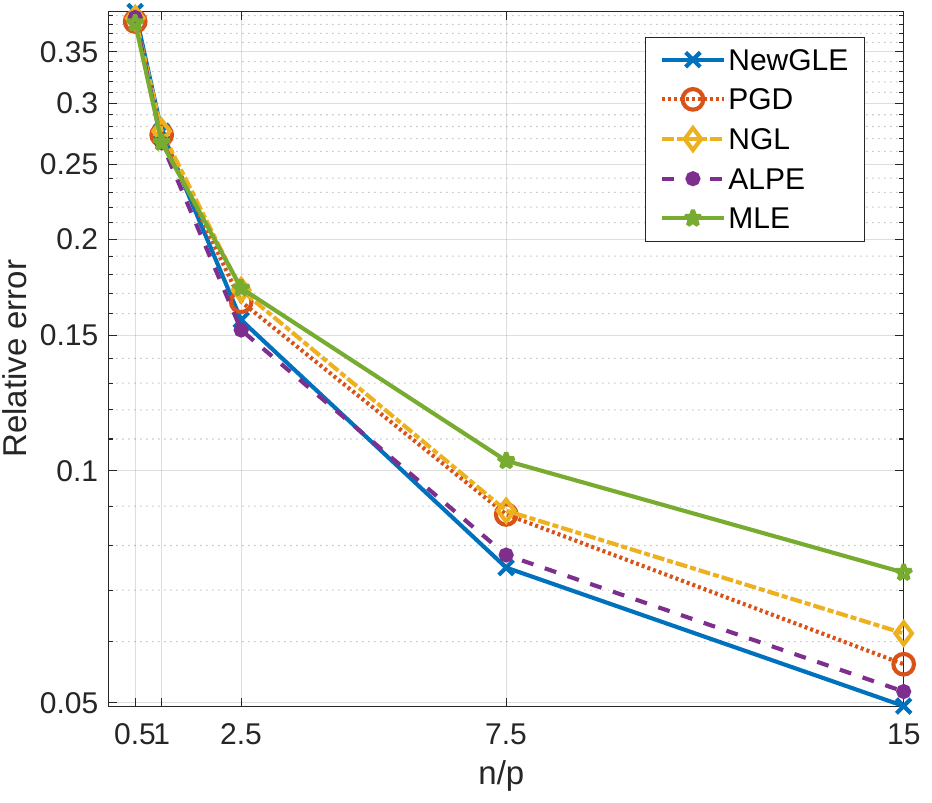} }}
    \subfloat[\centering]{{\includegraphics[width=0.245\textwidth]{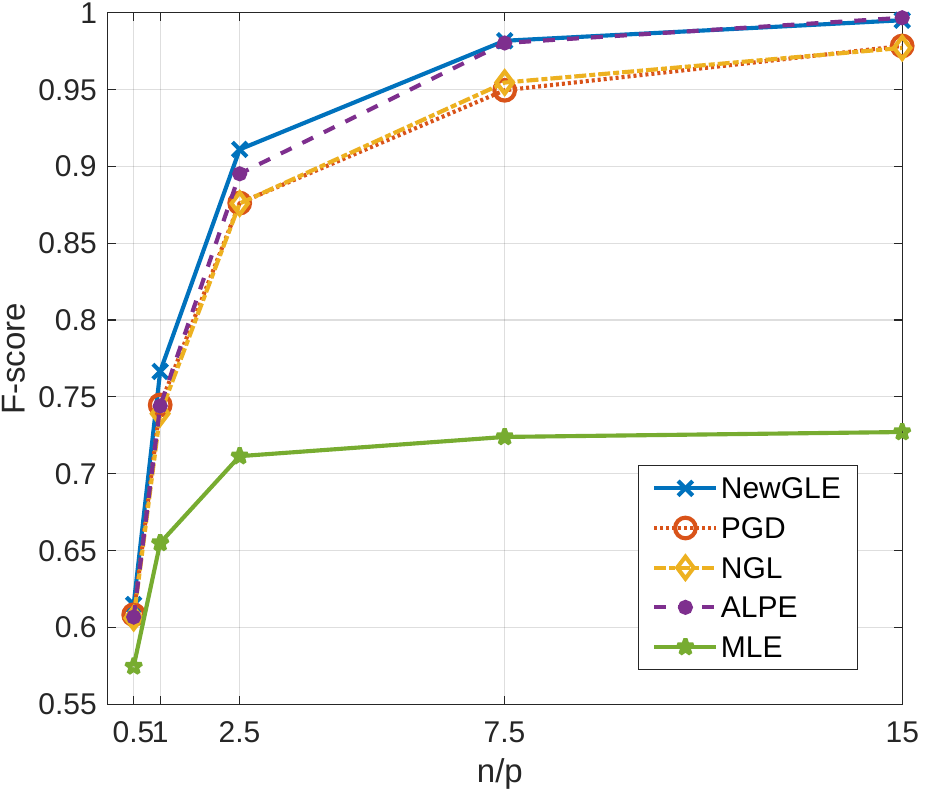} }}
    \caption{Performance comparisons in terms of (a) RE,  and (b) F-score with different sample size ratios $n/p$ for
    Barabasi-Albert graph of degree 2 with 100 nodes.}
    \label{ RE and vs time p = 100}
\end{figure}

Figures \ref{fig: RE and vs time p = 1000} and \ref{ RE and vs time p = 100} show the graph learning performance of different methods for learning the topology of planar graphs with 1,000 nodes (Fig. \ref{fig: RE and vs time p = 1000}) and of  Barabasi-Albert graphs of degree two with 100 nodes (Fig. \ref{ RE and vs time p = 100}). The performance is presented in terms of RE and F-score versus the sample size ratio $n/p$, where $n$ is the number of data samples used to calculate the sample covariance matrix, $\Smat$, and $p$ is the number of nodes.

Figures \ref{fig: RE and vs time p = 1000}(a) and \ref{ RE and vs time p = 100}(a) show that the REs of all the estimators decrease as the sample size ratio, $n/p$, increases.
Figures \ref{fig: RE and vs time p = 1000}(b) and \ref{ RE and vs time p = 100}(b) show that the F-score of all the methods increases as the sample size ratio $n/p$ increases. Figures  \ref{fig: RE and vs time p = 1000} and \ref{ RE and vs time p = 100} show that the proposed NewGLE method outperforms all compared methods (ALPE, NGL, PGD, and MLE) in both the RE and F-score senses. 
It should be noted that in \ref{fig: RE and vs time p = 1000}(b) the comparison with the MLE has been removed since it has a significantly lower F-score than the other methods and in order to highlight the differences between the other methods.
%because its performance in the FS sense was significantly lower and affected the resolution that allows us to show the differences between them. 
As shown in Figs. \ref{fig: RE and vs time p = 1000}(b) and \ref{ RE and vs time p = 100}(b) for a small sample size ratio, the methods do not provide a perfect F-score. This is because there may not be enough data samples to recover the graph connectivity effectively.
%Specifically, for $k/n \leq 1$, both $\Smat$ and the estimated graph Laplacian have a low rank. Since low-rank graph Laplacians correspond to disconnected graphs (i.e., graphs with more than a single connected component), they can cause false-negative (fn) connections that reduce the F-score. 
Still, the proposed NewGLE method consistently outperforms all compared methods in the F-score sense, especially when the sample size ratio is small. This outcome is significant in large-scale problems since, in these cases, the sample size ratio is usually small.
%%%%%%%%%%%
\subsection{Results - Computational Costs}
\label{ResultsCC_subsection}
\begin{figure}[hbt]
    \centering
    \subfloat[\centering]{{\includegraphics[width=0.25\textwidth]{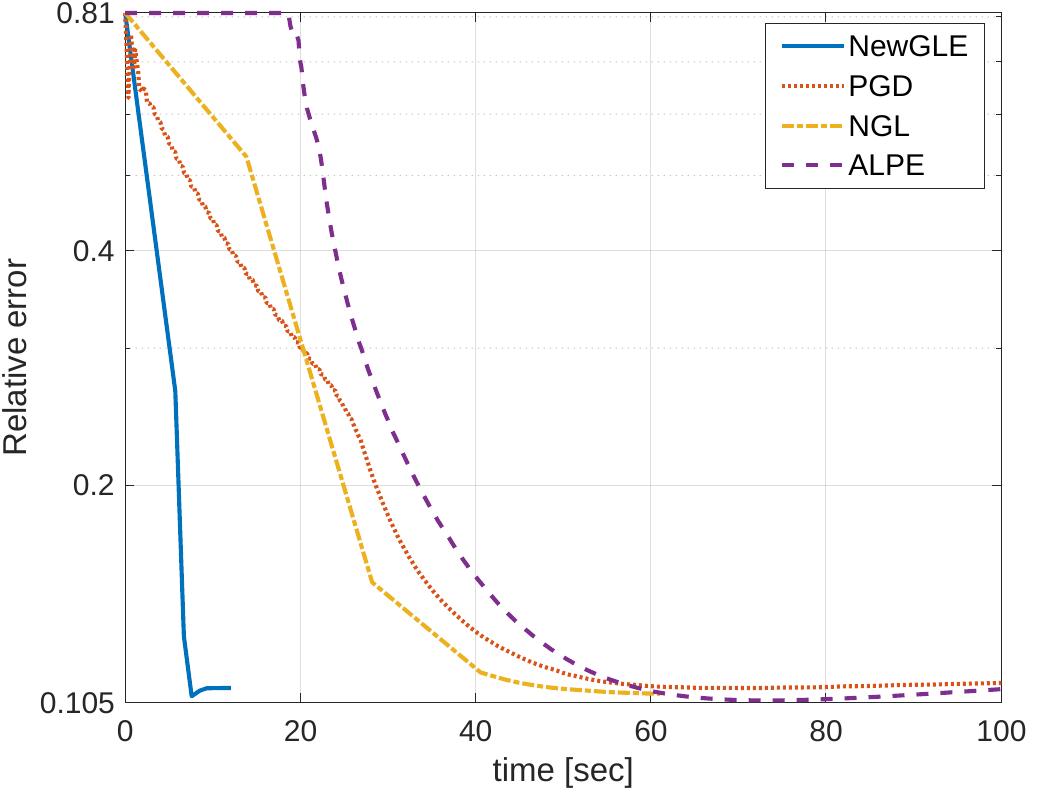} }}%
    \subfloat[\centering]{{\includegraphics[width=0.245\textwidth]{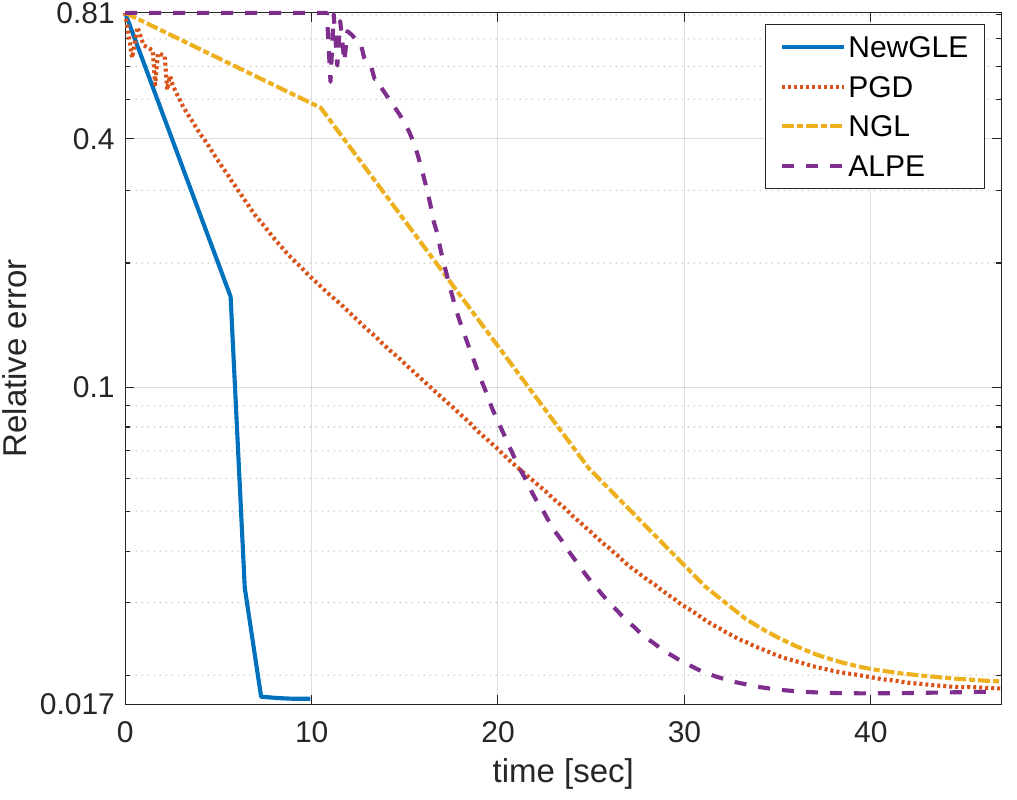} }}
    \caption{Run-time performance comparisons under RE with a sample size ratio (a) $n/p = 0.5$ and (b) $n/p = 15$ for random planar graphs with 1,000 nodes.}
    \label{fig: RE vs time}
\end{figure}

\begin{figure}[hbt]
    \centering
    \subfloat[\centering]{{\includegraphics[width=0.245\textwidth]{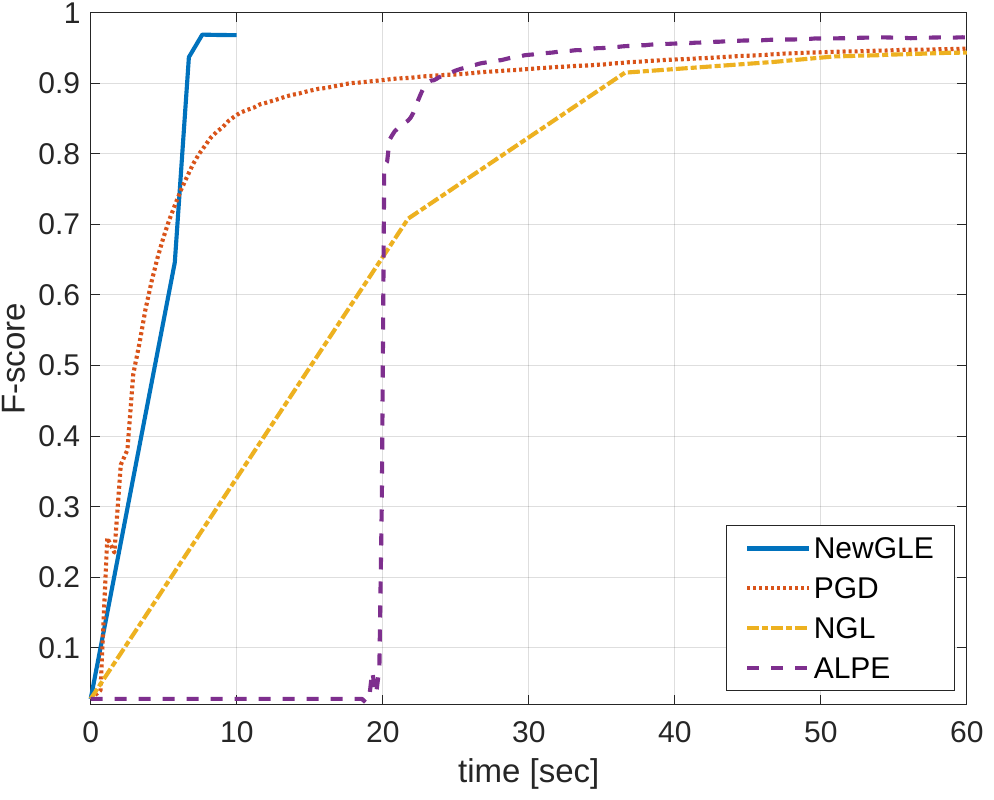} }}
    \subfloat[\centering]{{\includegraphics[width=0.245\textwidth]{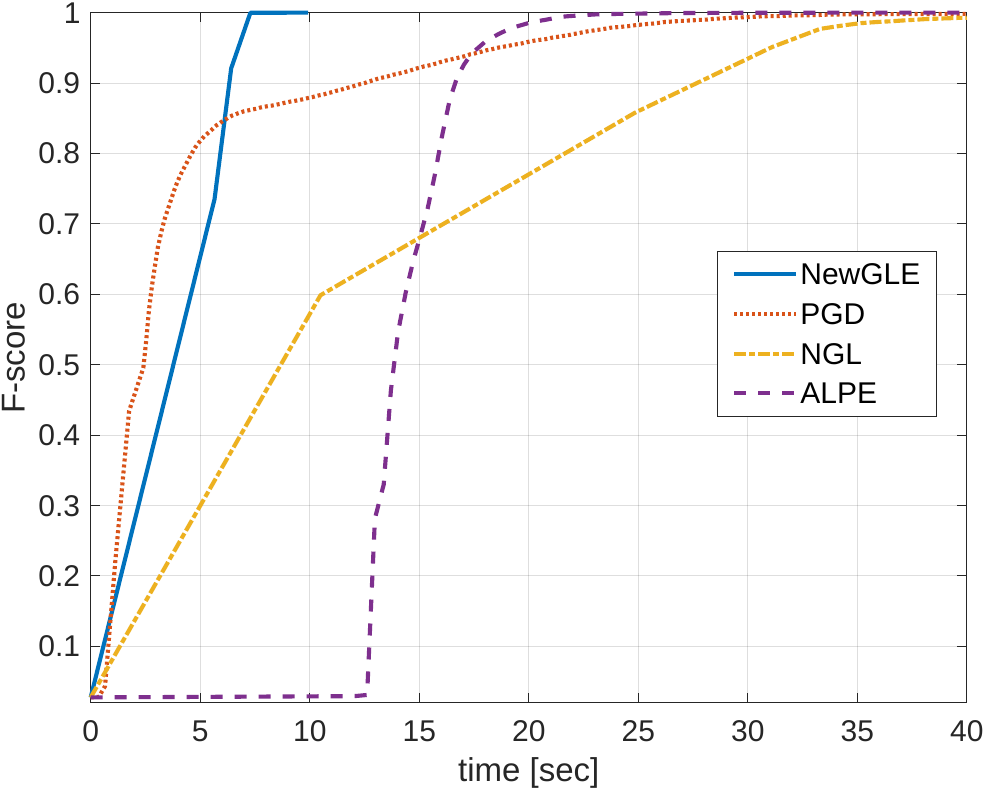} }}
    \caption{Run-time performance comparisons under F-score with a sample size ratio (a) $n/p = 0.5$ and (b) $n/p = 15$ for random planar graphs with 1,000 nodes.}
    \label{fig: FS vs time}
\end{figure}

\begin{figure}
    \centering
{{\includegraphics[width=0.4\textwidth]{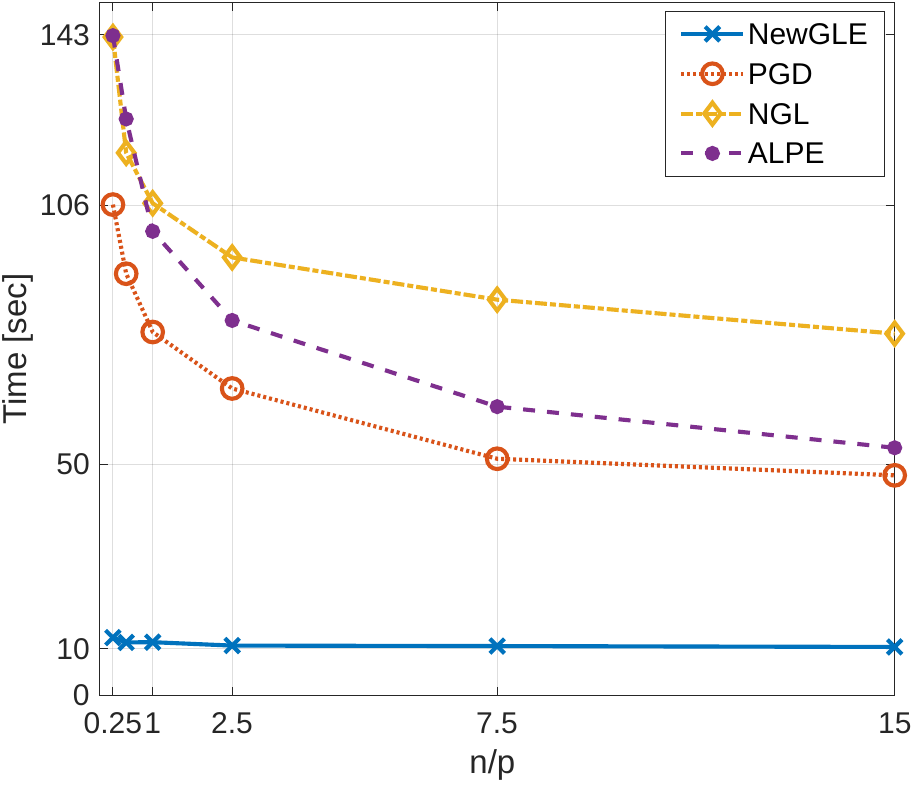} }}%
    \caption{Average convergence times over sample size ratio $n/p$ for random planar graphs with 1,000 nodes.}
    \label{fig: time vs nDp}
\end{figure}

Figures \ref{fig: RE vs time}-\ref{fig: time vs nDp} compare the computational efficiency of the proposed NewGLE method with the other methods in terms of run time. These experiments were conducted on relatively large planar graphs of $p=1,000$ nodes so that the measuring times less depend on minor implementation details.
Figure \ref{fig: RE vs time} shows the convergence of the compared methods in terms of RE for two sample size ratios: (a) $n/p = 0.5$, and (b) $n/p = 15$. Notably, the proposed method converges significantly faster than the other methods in both cases, while the difference between the methods is more significant for the case when the sample size ratio is smaller, $n/p = 0.5$.

Figure \ref{fig: FS vs time} shows the F-score of the compared methods during two realizations: (a) with sample size ratio $n/p = 0.5$, and (b) with sample size ratio $n/p = 15$. Notably, the proposed method reached the highest F-score and got it significantly faster than the run time taken for the other methods to achieve their highest F-score values. In addition, it can be seen that in the case of $n/p =15$, all the methods succeeded in estimating the connectivity of the graph, and the NewGLE, PGD, and ALPE methods managed to estimate it perfectly (i.e., F-score = 1). In addition, it can be seen that the advantage of the proposed method is more notable for a smaller sample size ratio, i.e., $n/p = 0.5$.
%Furthermore, one of the disadvantages of the ALPE method is demonstrated in Figs. \ref{fig: RE vs time} and \ref{fig: FS vs time}. While the performance of ALPE method in terms of as demonstrated in subsection \eqref{ResultsGL_subsection} is good and close to our proposed method, for this method to succeed to learn a graph effectively (in terms of RE and F-score), an investment of computational resources is required. In addition, this method requires a qualitative estimation of the weights, which is used for sparsity-promoting regularization. 
Additionally, the results presented in Figs. \ref{fig: RE vs time} and \ref{fig: FS vs time} reveal that while the RE and F-score performance of the ALPE method, as demonstrated in Subsection \eqref{ResultsGL_subsection}, is commendable and comparable to the proposed method, the implementation of the ALPE method necessitates a significant investment in computational resources to effectively learn a graph, as measured by RE and F-score metrics. The reason for this is that the ALPE method requires an estimation of the weights, which are used for sparsity-promoting regularization.

Finally, Fig. \ref{fig: time vs nDp} presents the average convergence time for each method over the sample size ratio $n/p$. It shows that the average convergence time of all estimators decreases as the sample size ratio $n/p$ increases. In addition, it emphasizes the significant advantage of the proposed method in terms of computational efficiency, especially when there are few samples and the sample size ratio is small (i.e., $n/p \leq 1$), which is typically the case in large-scale problems.
More experiments for real data are presented in Fig. \ref{fig:bio} in the Appendix, demonstrating the computational efficiency of NewGLE.

\subsection{Ablation Study}
In this section, we provide some intuition on different features of our method. Specifically, we demonstrate the influence of the following three choices:
\begin{enumerate}
\item The free set mechanism.
\item The nonlinear conjugate gradients method for the constrained Newton problem.
\item The influence of the $\varepsilon$ parameters.
\item The influence of the diagonal preconditioner.
\end{enumerate}

We show the performance using the random planar graph example with $p=1,000$ nodes and $n=500$ samples. Each run includes the NewGLE method without one of the algorithmic features. The regularization parameter for the experiment is $\lambda=0.25$, which is the best performing choice in terms of the error and F-score for this case. Figure \ref{fig:Ablation} shows a typical instance of this example, demonstrating the behavior of NewGLE without each of the features. All the results are comparable in their accuracy (error and F-scores). It is clear that the most important feature in our study is the usage of nonlinear projected conjugate gradients for solving the inner Newton problem. Without it (i.e., using preconditioned gradient descent), the method is much slower. The second most important feature is the diagonal preconditioner, which approximately halves the cost in this example, and is significant also in other cases in our experience. The use of the free set imposes only a minor improvement in this example and is the less dominant feature. The use of $\varepsilon$ slows down the algorithm a bit, but it is important for theoretical stability, according to our theory in Theorem \ref{Thm2}.

\begin{figure}
    \centering
{{\includegraphics[width=0.4\textwidth]{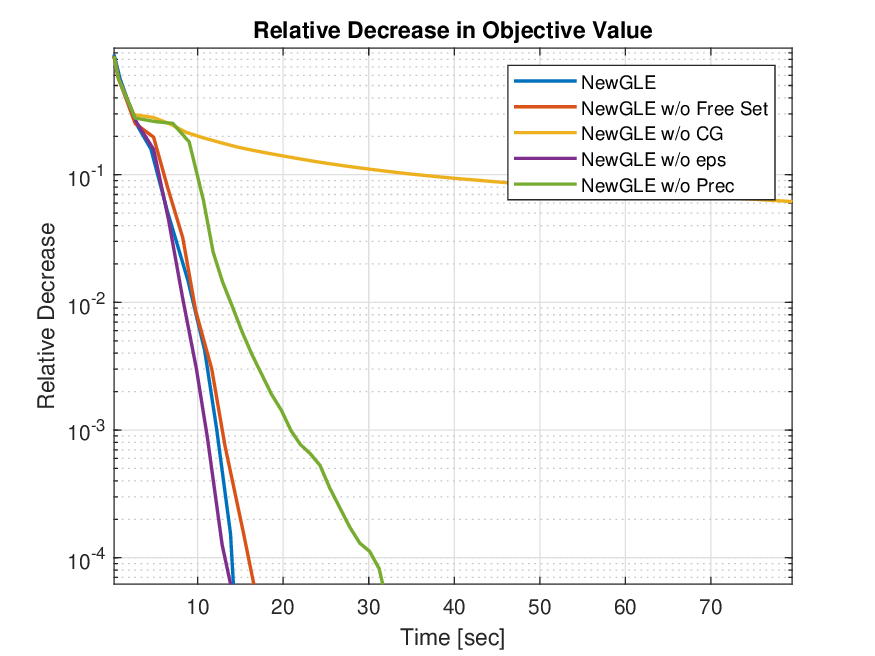} }}%
    \caption{Ablation study of computational times versus algorithmic choices.}
    \label{fig:Ablation}
\end{figure}

\section{CONCLUSION}
\label{conc}

In this paper, we developed a proximal Newton method for minimizing a regularized maximum likelihood objective function for estimating a sparse graph Laplacian precision matrix from data distributed according to an LGMRF model. Our method is based on a second-order approximation of the smooth part of the objective function, while keeping the regularization and constraints intact. Methods of this type have been proven to be efficient for estimating standard GMRF models using $\ell_1$-norm penalty, while in our case, the estimation is also subject to Laplacian constraint. 
Furthermore, we used the nonconvex MCP function to promote the sparsity of the resulting graph without introducing a significant bias. Beyond the quadratic approximation itself, we used three algorithmic features to treat the Newton problem at each iteration efficiently: 1) we used a free/active set splitting, (2) we used a nonlinear conjugate gradient method, and (3) we used diagonal preconditioning. 
This resulted in an accurate and efficient estimation of the sparse weighted graph.

Numerical studies demonstrated the effectiveness of the proposed algorithm in terms of accuracy and computational complexity. The proposed method outperformed recent methods in terms of convergence time, while achieving similar or better F-scores and relative errors, particularly for problems with a small sample size ratio (i.e., $n/p < 1$). The plausible performance of the proposed method is particularly important in high-dimensional problems, where computational efficiency is crucial, and sample scarcity is a common issue.
%A potential direction for future research can be improving the scaling of the proposed method to deal with problems of even higher dimensions, e.g., millions of variables. Then, efficiently computing the gradient with a low memory footprint is the most crucial part, and can be obtained by blocking strategies \citep{hsieh2013big,treister2014block,eftekhari2021block,bollhofer2019large}. % 

\section*{ACKNOWLEDGEMENTS}
This research was supported by the Israel Ministry of National Infrastructure, Energy, and Water Resources and by the ISRAEL SCIENCE FOUNDATION (grant No. 1148/22). This work was also supported in part by the Israeli Council for Higher Education (CHE) via the Data Science Research Center, Ben-Gurion University of the Negev, Israel.

% Loading bibliography database
%\bibliography{aistatsbib}
\bibliography{paper_AISTATS2024_CameraReady_476}

\section*{Checklist}
% %%% BEGIN INSTRUCTIONS %%%
%The checklist follows the references. For each question, choose your answer from the three possible options: Yes, No, Not Applicable.  You are encouraged to include a justification to your answer, either by referencing the appropriate section of your paper or providing a brief inline description (1-2 sentences). 
%Please do not modify the questions.  Note that the Checklist section does not count towards the page limit. Not including the checklist in the first submission won't result in desk rejection, although in such case we will ask you to upload it during the author response period and include it in camera ready (if accepted).

%\textbf{In your paper, please delete this instructions block and only keep the Checklist section heading above along with the questions/answers below.}
% %%% END INSTRUCTIONS %%%
 \begin{enumerate}
 \item For all models and algorithms presented, check if you include:
 \begin{enumerate}
   \item A clear description of the mathematical setting, assumptions, algorithm, and/or model. Yes.
   \item An analysis of the properties and complexity (time, space, sample size) of any algorithm. We provide an analysis of the convergence guarantees.
  \item (Optional) 
  Anonymized source code, with specification of all dependencies, including external libraries. Yes.
 \end{enumerate}

 \item For any theoretical claim, check if you include:
 \begin{enumerate}
   \item Statements of the full set of assumptions of all theoretical results. Yes.
   \item Complete proofs of all theoretical results. Yes.
   \item Clear explanations of any assumptions. Yes.
 \end{enumerate}

 \item For all figures and tables that present empirical results, check if you include:
 \begin{enumerate}
   \item The code, data, and instructions needed to reproduce the main experimental results (either in the supplemental material or as a URL). Yes. Instructions to reproduce the experiments are in the paper. Our MATLAB code is publicly available on Github. 
   \item All the training details (e.g., data splits, hyperparameters, how they were chosen). Hyperparameters are given in the paper, including instructions on the data generation. 
         \item A clear definition of the specific measure or statistics and error bars (e.g., with respect to the random seed after running experiments multiple times). Yes, although the variance is quite low and hence omitted.
         \item A description of the computing infrastructure used. (e.g., type of GPUs, internal cluster, or cloud provider). Yes.
 \end{enumerate}

 \item If you are using existing assets (e.g., code, data, models) or curating/releasing new assets, check if you include:
 \begin{enumerate}
   \item Citations of the creator If your work uses existing assets. Yes.
   \item The license information of the assets, if applicable. Not Applicable.
   \item New assets either in the supplemental material or as a URL, if applicable. Not Applicable.
   \item Information about consent from data providers/curators. Not Applicable.
   \item Discussion of sensible content if applicable, e.g., personally identifiable information or offensive content. Not Applicable.
 \end{enumerate}

 \item If you used crowdsourcing or conducted research with human subjects, check if you include:
 \begin{enumerate}
   \item The full text of instructions given to participants and screenshots. Not Applicable.
   \item Descriptions of potential participant risks, with links to Institutional Review Board (IRB) approvals if applicable. Not Applicable.
   \item The estimated hourly wage paid to participants and the total amount spent on participant compensation. Not Applicable.
 \end{enumerate}

 \end{enumerate}

\newpage

%%%%%%%%%%%%%%%%%%%%%%%%%%%%%%%%%%%%%%%%%%%%%
\onecolumn
\aistatstitle{Efficient Graph Laplacian Estimation by Proximal Newton: \\ Supplementary Materials}
%%%%%%%%%%%%%%%%%%%%%%%%%%%%%%%%%%%%%%%%%%%%%%%%%%%
%\appendix
%{\LARGE{\textbf{APPENDIX}}}
%%%%%%%%%%%%%%%%%%%%%%%%%%%%%%%%%%%%%%%%%%%%%%%%%%

\section{PROOF OF THEOREM \ref{Th1}}
\label{sec:ProofThm1}

In this section we prove Theorem \ref{Th1} (which is also repeated below). We note that this property was used in the works \citep{egilmez2017graph,ying2020nonconvex} without explicit proof, and here we provide it. 

\textbf{Theorem 1}\textit{
The optimization problem 
\begin{equation}\nonumber
\begin{aligned}
    \underset{\Lmat\in \mathcal{F}}{\text{minimize}}
     \quad & \TR{\Lmat \Smat} -\log|\Lmat+\Jmat|
        +\rho\braro{\Lmat;\lambda},
\end{aligned}
\end{equation}
where the feasible set is given by  
\beqna \nonumber
\mathcal{F}=\left\{
    \Lmat \in \mathcal{S}^p \Big{|} \begin{array}{ll} 
    \Lmat_{i,j}\leq 0, \forall i \neq j,~i,j=1,\ldots,p,\;\; \Lmat\onevec=\zerovec
    \end{array}\right\}.
\eeqna
is equivalent to the 
optimization problem stated in \eqref{minimization_problem_3}}.

\begin{proof} 
	
Since the objective functions in \eqref{minimization_problem_3} and in \eqref{minimization problem 4} are identical (through \eqref{pseudoL}), it remains to prove that the feasible sets (solution spaces) of the two problems are equal under this objective function. It should be noted that there are two distinctions between the feasible sets $\mathcal{L}$ and $\mathcal{F}$ in problems \eqref{minimization_problem_3} and \eqref{minimization problem 4}, respectively. The first is that $\mathcal{L}$ includes a positive semi-definite (PSD) constraint (i.e., $ \Lmat\in \mathcal{S}^p_+$), and the second is that $\mathcal{L}$ includes a rank constraint (i.e., $\text{rank}\braro{\Lmat} = p-1$).

To address the PSD constraint in problem \eqref{minimization_problem_3}, we observe that the constraints $\Lmat\onevec=\zerovec,$ and $\brasq{\Lmat}_{i,j}\leq 0$ for $i \neq j$ are present. Therefore, it follows that every matrix $\Lmat\in\mathcal{F}$ is a diagonally dominant matrix with positive diagonal entries. Hence, requiring symmetry is sufficient to ensure that the minimizer of \eqref{minimization problem 4} is PSD as well \citep{horn2012matrix}.

Next, we show that for any graph Laplacian $\Lmat\in\mathcal{F}$, adding the constraint ${\textrm{rank}}\braro{\Lmat} = p-1$ is either equivalent to ${\textrm{rank}}\braro{\Lmat+\Jmat} = p$, or that the matrix $\braro{\Lmat+\Jmat}$ is non-singular under the considered objective function. To this end, we note that any $\Lmat\in\mathcal{F}$ admits an eigenvalue decomposition $\Umat\Lambda_L\Umat^T$, where $\Umat$ is $p \times p$ matrix of the eigenvectors of $\Lmat$, and
$\Lambda_L$ is a diagonal eigenvalue matrix as follows
    \be
        \Lambda_L = \textrm{diag}\braro{\brasq{0,\lambda_2,\dots,\lambda_p}^T},
    \ee
where $0,\lambda_2,\ldots,\lambda_p$ are the eigenvalues of $\Lmat$. The rank of $\Lmat$ is the number of positive eigenvalues. In addition, the first eigenvector of $\Lmat$ is  $\frac{1}{\sqrt{p}}{\mathbf{1}}$ and it corresponds to the eigenvalue $\lambda_1=0$. On the other hand, an eigendecomposition of $\Jmat$ can be $\Umat\Lambda_J\Umat^T$, where 
\be
     \Lambda_J = \textrm{diag}\braro{\brasq{1,0,\dots,0}^T}
\ee
and the eigenvector of $\Jmat$ that is associated with the single nonzero eigenvalue is $\frac{1}{\sqrt{p}}{\mathbf{1}}$---the same one as in $\Lmat$. Hence, for the same eigenvector matrix $\Umat$, the eigenvalues of the matrix $\Lmat+\Jmat$ are $1,\lambda_2,\ldots,\lambda_p$, and ${\textrm{rank}}\braro{\Lmat+\Jmat} = {\textrm{rank}}\braro{\Lmat}+1$. Hence, to conclude, if $\Lmat\in\mathcal{F}$ and  $\braro{\Lmat +\Jmat}$ is a non-singular matrix, then ${\textrm{rank}}\braro{\Lmat} = p-1$.

%Notice that it is possible to add a rank constraint, rank  ${\textrm{rank}}\braro{\Lmat+\Jmat} = p$, in \eqref{minimization problem 4} without changing the problem's solution. Since the rank constraint is satisfied for any matrix $\Lmat\in \mathcal{F}$ for which the matrix $\braro{\Lmat+\Jmat}$ is a non-singular matrix. 
%This can be seen from the fact that 

Now we come to the last part of the proof. Assume by contradiction that the minimizer of \eqref{minimization problem 4} satisfies that the matrix $\braro{\Lmat+\Jmat}$ is singular, then the expression $-\log{\braro{|\Lmat+\Jmat|}}$ in the objective approaches infinity. Thus, any other matrix $\Lmat\in \mathcal{F}$ for which the matrix $\braro{\Lmat+\Jmat}$ is a non-singular matrix will yield a lower (finite) objective value in \eqref{minimization problem 4}. Hence it can be concluded that the solution of \eqref{minimization problem 4} satisfies that the $\braro{\Lmat+\Jmat}$ matrix is non-singular, i.e., ${\textrm{rank}}\braro{\Lmat+\Jmat} = p$ and hence ${\textrm{rank}}\braro{\Lmat} = p-1$.  
\end{proof}

\vfill 
\section{DERIVATION OF THE PARAMETERIZED NEWTON PROBLEM} \label{sec:paramNewton}

First, we simplify the writing constraints of \eqref{Newton Problem}. We assume that the previous iterate $\Lmat^{(t)}$ satisfies the constraints, and by definition $\Lmat^{(t+1)}$ will satisfy them as well. To achieve that we also choose $\Lmat^{(0)}\in\mathcal{F}$. Based on the definition of $\mathcal{F}$ in \eqref{Lc - 9}, under the assumption that $\Lmat^{(t)}$ satisfies the constraints, in order to obtain an update $\Lmat^{(t+1)}$ that  satisfy these constraints, we  require the following conditions on the variable  $\Delta$:
\begin{itemize}
    \item $\Lmat^{(t)}+\Delta \in \mathcal{S}^p$. Hence, we require that $\Delta \in \mathcal{S}^p$.
    \item The off-diagonal elements should satisfy  $\brasq{\Lmat^{(t)}+\Delta}_{i,j}\leq 0$,  $\forall i\neq j$. Hence, we require that  $\Delta_{i,j}\leq-\Lmat^{(t)}_{i,j}$.
    %  $\forall i \neq j$.
    \item $\braro{\Lmat^{(t)}+\Delta}\onevec = \zerovec$. Assuming that  $\Lmat^{(t)}\onevec = \zerovec$, then we only need to require $\Delta\onevec = \zerovec$.  
\end{itemize}
By substituting these conditions in \eqref{Newton Problem} and removing the constant term in the objective,  we can rewrite the Newton problem from \eqref{Newton Problem} as follows:
\begin{equation}
\begin{aligned}
\label{Newton Problem 1.5}
    \underset{\Delta \in \mathcal{S}^p}{\text{minimize}}
     \quad &  \TR{\Delta\braro{\Smat - \Qmat}} + \frac{1}{2}\TR{\Delta\Qmat\Delta\Qmat} + \frac{1}{2}\|\varepsilon\odot\Delta\|_F^2 +\rho_{MCP}\braro{\Lmat^{(t)}+\Delta;\lambda}\\
    \textrm{s.t.}
    \quad & \Delta_{i,j}\leq-\Lmat^{(t)}_{i,j},  \forall i\neq j \\
    \quad & \Delta\onevec = \zerovec.
\end{aligned}
\end{equation}

Next, we further simplify the problem by replacing the symmetric unknown matrix $\Delta\in\mathcal{S}^p$ with a vector $\delta\in\mathbb{R}^{p(p-1)/2}$ by using Definition \ref{DEF2} of the linear operator $\mathcal{P}: \mathbb{R} ^{p(p-1)/2}$. According to this definition, given any $\wvec \geq 0$, $\Lx{\wvec}$ satisfies $\sum_j\Lx{\wvec}_{i,j} = 0 $ for any $i \in \bracu{1,\dots,p}$, and $\Lx{\wvec}_{i,j} = \Lx{\wvec}_{j,i}$ for any $i \neq j$. In other words, $\Lx{\wvec}$ is a Laplacian matrix for any $\wvec$ which has nonnegative values.  The simple example below illustrates the definition of $\mathcal{P}$. Let $\wvec=[w_1,w_2,w_3] \in \mathbb{R}^3$ be a nonnegative vector. Then
\beqna
\label{eq:P_example}
\Lx{\wvec} = 
\begin{bmatrix}
\sum_{i=1,2}w_i & -w_1 & -w_2\\
-w_1 & \sum_{i=1,3}w_i & -w_3\\
-w_2 & -w_3 & \sum_{i=2,3}w_i\\
\end{bmatrix} .
\eeqna

Since the Laplacian is a symmetric matrix, we can rewrite the sparse promoting term in \eqref{Newton Problem 1.5} as:
\begin{equation}
\begin{aligned}
    \label{r mcp for vec}
\rho_{MCP}\braro{\Lx{\wvec};\lambda} &\overset{(a)}{=}  \sum_{i\neq j}MCP\braro{\Lmat_{ij};\gamma,\lambda} 
%\\
% &= \sum_{i> j}MCP\braro{\Lmat_{ij};\gamma,\lambda} 
%+\\&+ \sum_{i< j}MCP\braro{\Lmat_{ij};\gamma,\lambda}
\\ &\overset{(b)}{=}
 2\sum_{i> j}MCP\braro{\Lmat_{ij};\gamma,\lambda}\\ &\overset{(c)}{=} 
2\sum_{k=1}^{p\braro{p-1}/2}MCP\braro{-w_{k};\gamma,\lambda} 
\\ &\overset{(d)}{=}2\tilde{\rho}_{mcp}\braro{{\wvec};\lambda},
\end{aligned}
\end{equation}
where $\Lmat = \Lx{\wvec}$. 
In \eqref{r mcp for vec}, 
$(a)$ is obtained by substituting \eqref{r mcp}, $(b)$ is obtained from the symmetry of $\Lmat$ (i.e., since $\Lmat\in \mathcal{S}^p$), and $(c)$ is obtained by substituting \eqref{Lx matrix}. Finally,  $(d)$ is obtained since $\tilde{\rho}_{MCP}\braro{{\wvec};\lambda}$ defined in \eqref{rho_def} is an even function. Noting that for $\wvec,\delta\in\mathbb{R}^{p(p-1)/2}$, such that $\Lmat = \Lx{\wvec}$ and $\Delta = \Lx{\delta}$, we can write:
\begin{itemize}
    \item  $\rho_{mcp}\braro{\Lmat+\Delta;\lambda} \overset{\eqref{r mcp for vec}}{=}2\tilde{\rho}_{mcp}\braro{\wvec+\delta;\lambda}$.
    \item $\Delta \in \bracu{\Delta\in\mathcal{S}^p| \Delta\onevec = \zerovec,\Delta_{i,j}\leq-\Lmat^{(k)}_{i,j},  \forall i\neq j}$
    $\overset{\eqref{Lx matrix}}{\iff} \delta \in \bracu{\delta\in \mathbb{R}^{p(p-1)/2}|\quad \delta \geq -\wvec}$.
    \item $\frac{1}{2}\|\varepsilon\odot \Delta\|_F^2 = \|\tilde\varepsilon\odot\delta\|_2^2$, which holds trivially because the diagonal of $\varepsilon$ is 0 and it is symmetric. 
\end{itemize} 
Hence, \eqref{Newton Problem 1.5}, which is equivalent to \eqref{Newton Problem}, is also equivalent to \eqref{Newton Problem 2}, which is the problem we solve in practice in the inner Newton problem.

\section{NONLINEAR PRECONDITIONED CONJUGATE GRADIENTS}\label{sec:NLCG}
Generally, iterates of projection methods are obtained by
\be
\label{CG}
\delta^{(t+1)} = \Pi_U\bracu{\delta^{(t)} + \zeta\dvec^{(t)}},
\ee
where $\delta^{(t)}$ is the current iterate, $\dvec^{(t)}$ is the descent (search) direction, $\zeta>0$ is a step-size parameter, $U$ is some feasible set, and $\Pi_U\bracu{\vvec}$ is a projection of $\vvec$ onto $U$. That is, using the $\ell_2$ norm we have $\Pi\{\vvec\}=\arg\min_\xvec\|\xvec-\vvec\|_2$. Since there is only a nonnegativity constraint in \eqref{Newton Problem 2}, the projection here is simple to calculate and is given by:
\be
\label{proj}
    \Pi_{\vvec+\wvec \geq \zerovec} \bracu{\vvec} = \textrm{max}\bracu{\vvec,-\wvec}.
\ee
The step-size $\zeta>0$ is computed by a linesearch procedure, and the descent direction $\dvec^{(t)}$ is
given by
\beqna
    \label{d CG}
    \dvec^{(t)}=
    \left\{\begin{array}{lr}
    -\nabla_{\delta} f_N \braro{\delta^{(t)}} &  \textrm{if} \quad t = 0\\
    -\nabla_{\delta} f_N \braro{\delta^{(t)}} + \beta_t\dvec^{(t-1)}  & \textrm{if} \quad t>0 
    \end{array},\right.
\eeqna
where $\beta_t$ is a scalar and ${f_N(\delta^{(t)})}$ is the  
objective defined in \eqref{Newton Problem 2} at the point $\delta^{(t)}$. Since 1952, there have been many formulas for the scalar $\beta_t$, and in this paper, we use the DY method \citep{dai1999nonlinear}, given by
\be
\label{beta DY}
    \beta^{DY}_t = \textrm{max}\bracu{0,\frac{\|\nabla_{\delta} f^{(t)}_N\|^2_2}{\braro{\nabla_{\delta} f^{(t)}_N - \nabla_\delta f^{(t-1)}_N}^T\dvec^{(t-1)}}}.
\ee
%The DY method is a relatively new NLCG method for unconstrained optimization, but it has gotten more and more attention. 
One remarkable property of the DY method is that it produces a descent direction at each iteration and converges globally for convex problems in the case where linesearch is used \citep{dai1999nonlinear}.

To apply the projected NLCG, the gradient of $f_N(\delta)$ (defined in \eqref{Newton Problem 2}) is needed and is given by
\beqna
\label{grad by delta}
\begin{aligned}
    \nabla_{\delta}f_N = \mathcal{P}^*\braro{\nabla_\Delta f_N}= \quad &
    \mathcal{P}^*\braro{\Smat - \Qmat + \Qmat\Lx{\delta}\Qmat} + 2\tilde\varepsilon^2\odot\delta + 2\nabla_{\delta}\tilde{\rho}_{mcp}\braro{\wvec+\delta;\lambda},
\end{aligned}
\eeqna
where $\Lx{\delta} = \Delta$, and $\mathcal{P}^*$ is the adjoint of $\mathcal{P}$, that is
\begin{equation}\label{eq:adjoint P}
\brasq{\mathcal{P}^*\braro{\Ymat}}_k \define Y_{ii} + Y_{jj} - Y_{ij} - Y_{ji}
\end{equation}
for $k$ as in Definition \ref{DEF2}. Here we see the advantage of the proximal Newton approach. In \eqref{grad by delta} the gradient of $\tilde\rho$ is computed at $\wvec+\delta$, instead of $\wvec$ alone if we were to naively majorize $\tilde\rho$ in the outer Newton problem.  So, the inner solver is able to adapt to the curvature of $\tilde \rho$ and to the different sections in its definition.

\subsection{Diagonal Preconditioning}
\label{preconditioner_subsection}
The NLCG method that we use in this work is essentially an accelerated gradient-based method. To further accelerate it with rather minimal computational effort, a diagonal approximation to the Hessian can be used, acting as a preconditioner in the NLCG method. The aim of the diagonal preconditioner is
%\tred{remove previous works here} The works in \citep{treister2014block,treister2016multilevel} offer efficient solvers for GLASSO and are based on second-order methods. In addition, these works suggest using a diagonal preconditioner 
to properly scale the gradient and speed up the rate of convergence of the iterative inner method for the Newton direction. In this paper, we suggest using the diagonal of the Hessian with respect to $\delta$ as a preconditioner with NLCG. Below we show the derivation of this preconditioner. 

Our diagonal preconditioner matrix, $\Dmat$, is defined as the diagonal of the Hessian of $f_N$. To ease the derivation, we ignore the $\varepsilon$ term of \eqref{Newton Problem 2} in this section. Following the Hessian of $f$ at \eqref{grad and h} and considering the linear operator $\mathcal{P}$ as a matrix in $\mathbb{R}^{p^2\times p(p-1)/2}$ (i.e., ignoring the vector-to-matrix reshaping in its definition), the Hessian of $f_N$ is
\begin{equation}
\Hmat_N = \mathcal{P}^\top\left(\Qmat\otimes\Qmat\right)\mathcal{P}\in\mathbb{R}^{p(p-1)/2\times p(p-1)/2}.
\end{equation}
To get its $(k,k)$ diagonal entry for our preconditioner, we multiply $\Hmat_N$ by a unit vector from both sides:
\begin{equation}\label{eq:Dkk}
[\Dmat]_{k,k} = [\Hmat_N]_{k,k} = \mathbf{e}_k^T\Hmat_N \mathbf{e}_k,
\end{equation}
where $\mathbf{e}_k\in\mathbb{R}^{p(p-1)/2}$ is the unit vector of zeros, except the $k$-th entry, which equals 1. Equivalently to \eqref{eq:Dkk}, we can write 
\begin{equation}
\label{eq:diagonal_of_H}
[\Hmat_N]_{k,k} = \TR{\mathcal{P}(\mathbf{e}_k)\Qmat\mathcal{P}(\mathbf{e}_k)\Qmat},
\end{equation}
where the matrix $\mathcal{P}(\mathbf{e}_k)\in\mathbb{R}^{p\times p}$ is a matrix that is essentially the reshape of the $k$-th column of the operator $\mathcal{P}$, if we consider it as a matrix. If we look at the example of \eqref{eq:P_example}, then taking $k=1$ corresponds to
$$
\mathcal{P}(\mathbf{e}_1) = \begin{bmatrix}
\;\;1 & -1 & 0\\
-1 & \;\;1 & 0\\
\;\;0 &\;\;0 & 0\\
\end{bmatrix}. 
$$ 
Note that $k=1$ in \eqref{eq:P_example} influences only the $2\times2$ sub-matrix of the indices $(1,2)$ in $\mathcal{P}(\mathbf{e}_1)$. More generally, $\mathcal{P}(\mathbf{e}_k)$ results in a matrix of zeros, except a $2\times2$ nonzero submatrix of the indices $(i,j)$  where $k$ corresponds to the pair $(i,j)$ according to Definition \ref{DEF2}. Hence, multiplying $\mathcal{P}(\mathbf{e}_k)\Qmat$, will result in a matrix of zeros except the $i$-th and $j$-th \emph{rows}, which are equal to $\textbf{q}_i - \textbf{q}_j$ and $\textbf{q}_j - \textbf{q}_i$, respectively, where $\textbf{q}_i$ is the $i$-th column/row of (the symmetric) $\Qmat$. According to \eqref{eq:diagonal_of_H} we now have 
\begin{eqnarray}
\label{Hkk 3}
[\Hmat_N]_{k,k} &=& \TR{(\mathcal{P}(\mathbf{e}_k)\Qmat)^2} \nonumber \\
&=& \TR{\begin{pmatrix}
[\Qmat]_{i,i} - [\Qmat]_{j,i} & [\Qmat]_{i,j} - [\Qmat]_{j,j}\\ [\Qmat]_{j,i} - [\Qmat]_{i,i} & [\Qmat]_{j,j} - [\Qmat]_{j,i}
\end{pmatrix}^2} \nonumber \\
& = & \braro{\brasq{\Qmat}_{i,i}+\brasq{\Qmat}_{j,j}-2\brasq{\Qmat}_{i,j}}^2 = [\Dmat]_{k,k},
\end{eqnarray}
where $\braro{i,j}$ corresponds to the index $k$  according to Definition \ref{DEF2}. In the last equality we used the symmetry of $\Qmat$, i.e., $[\Qmat]_{j,i}=[\Qmat]_{i,j}$. Below, Algorithm \ref{algorithm for Newton problem} summarizes the projected and preconditioned NLCG method for solving \eqref{Newton Problem}. Note that the linesearch in the last bullet of the algorithm is applied before the projection, hence taking a smaller steplength $\xi$ means we effectively enlarge the Hessian approximation at each step.

%\begin{algorithm}[H]
\begin{algorithm}[hbt]
\caption{Projected \& preconditioned NLCG}
\label{algorithm for Newton problem}
%\begin{algorithmic} [1]
    \SetAlgoLined
    \KwInput{$\Lmat^{(t)} = \Lx{\wvec},\Qmat,\text{Free}\braro{\Lmat^{(t)}},\lambda$}
    \KwResult{Newton direction $\Delta^{(t)}$}
    \KwInit{  Set $\delta^{(0)} =\zerovec$}
    
    \While{Stopping criterion is not achieved}{
         \begin{itemize}
            \item Compute the preconditioner $\Dmat$ as in \eqref{Hkk 3}.\\
            \item Compute the gradient  $\nabla_{\delta}f_N$ as in \eqref{grad by delta}.\\
            \item Apply the preconditioner $\gvec = %\mvec\odot
            \braro{\Dmat^{-1}\cdot\nabla_{\delta}f_N}$.
            \item Zero non-free elements: $g_k\leftarrow 0\;\forall k\in\text{Free}\braro{\Lmat^{(t)}}$.
            \item Compute $\dvec$ as in \eqref{d CG}, with $\gvec$ in the role of $\nabla_\delta f_N$.\\
            \item  Update: $\delta^{(t+1)} = \textbf{max}\braro{\delta^{(t)} + \zeta \dvec,-\wvec}$, where $\zeta$ achieved by linesearch.
         \end{itemize}
        }
%\end{algorithmic}
\end{algorithm}

\vfill
\newpage
\section{CONVERGENCE GUARANTEES: PROOF OF THEOREM \ref{Thm2}} \label{app:convergence}
In this section, we provide theoretical proof for the convergence of our method to a stationary point (Theorem \ref{Thm2}). We note that the MCP penalty is nonconvex; hence one typically gets to a local minimum using a monotone series of iterations. Some results in our convergence proof follow \citep{hsieh2014quic} (e.g., Lemma \ref{lem:bounds} and Lemma \ref{lem:decrease}). However, in the work \citep{hsieh2014quic}, the $\ell_1$ norm is used as a penalty, and the proof heavily relies on its convexity. Here, we had to adapt the proofs for these Lemmas to the concave MCP penalty, which is far from trivial and has not been done before, to the best of our knowledge. 

This section is organized as follows. In subsection \ref{sec10:setup} we define the considered setup of proximal methods for MCP regularized penalties. In subsection \ref{sec10:spectral} we provide some spectral bounds on our problem, which are needed for the proofs that follow. In subsection \ref{sec10:fixedpoint} we provide a lemma proving that our method is a fixed point iteration. In subsection \ref{sec10:decrease} we prove that the objective value sufficiently decreases between the iterations, which is the main point needed for the convergence proof. Lastly, in subsection \ref{sec10:convergence} we finalize the convergence proof given the previous auxiliary lemmas. 

\subsection{The Proximal Methods Setup}\label{sec10:setup}
Let us write the problem \eqref{minimization problem final} as follows
\beqna 
\label{eq:minimization problem F}
         \underset{\Lmat\in \mathcal{F}}{\text{minimize }} F(\Lmat) = 
           f(\Lmat) + \rho_{MCP}\braro{\Lmat;\lambda},
\eeqna
where $f$ is the smooth part of the objective, defined in \eqref{eq:smooth_f}, 
$\mathcal{F}$ and $\rho_{MCP}\braro{\Lmat;\lambda}$  are defined in \eqref{Lc - 9} and \eqref{r mcp}, respectively. In the setup of general proximal methods, at each iteration $t$, we minimize the penalized and constrained quadratic objective, as follows:
\begin{equation}
\begin{aligned}
\label{eq:minimization problem G}
   \Delta^{(t)}= \underset{\Delta \in \mathcal{S}^{p}}{\text{argmin}}
     \quad & G(\Delta) = f(\Lmat^{(t)}) + \big\langle\nabla f(\Lmat^{(t)}), \Delta\rangle + \frac{1}{2}\big\langle\Delta, \Delta\rangle_{\Mmat^{(t)}} + \rho_{MCP}\braro{\Lmat^{(t)}+\Delta;\lambda}\\
    \textrm{s.t.}
    \quad & (\Lmat^{(t)}+\Delta) \in \mathcal{F},
\end{aligned}
\end{equation}
where $\Mmat^{(t)}$ is some positive definite matrix that is an approximation of the true Hessian, or the true Hessian itself. In particular,  Proximal Gradient Descent uses $\Mmat = \Imat$, while proximal Newton will have $\Mmat = \nabla^2f(\Lmat) + \mbox{diag}(\mbox{vec}(\varepsilon))$, as in \eqref{Newton Problem} and Algorithm \ref{alg:OuterAlg}. Once the quadratic problem \eqref{eq:minimization problem G} is solved, we apply linesearch as in \eqref{eq:linesearch}, finding a step size $\alpha^{(t)}$ that ensures a reduction in the objective $F$. We now prove the convergence of this framework to our Laplacian estimation problem.   

\paragraph{Optimality Conditions}
We can state that by the definition of $\mathcal F$ our optimality condition is given by
\begin{equation}\label{eq:sub_zero_F}
\left\{
\begin{array}{ll}
0 \in \frac{\partial F(\Lmat^*)}{\partial \Lmat^*_{i,j}} + \mu_{i,j},& i>j\\
\Lmat^*_{j,i}=\Lmat^*_{i,j}\geq 0,& i\neq j\\
\Lmat^*_{i,i} = \sum_{j\neq i}-\Lmat^*_{i,j}& i=1,...,p
\end{array}\right.,
\end{equation}
where $\frac{\partial F(\Lmat^*)}{\partial \Lmat_{i,j}} = [\nabla f]_{i,j} +  \frac{\partial MCP(\Lmat^*_{ij};\gamma,\lambda)}{\partial \Lmat_{ij}}$ is the sub-differential of $F$ w.r.t $\Lmat_{i,j}$, $\mu_{i,j}\geq 0$ is an inequality Lagrange multiplier for the constraint $\Lmat_{i,j} \leq 0$, and the sub-differential of the MCP function in \eqref{MCP} is given by
\begin{equation}
    \label{gMCP}
    \frac{\partial MCP(x;\gamma,\lambda)}{\partial x}
    = \left\{\begin{array}{lc}
    \in [-\lambda,\lambda] & x = 0\\
    \lambda\cdot\text{sign}\braro{x} - \frac{x}{\gamma} 
 & 0 < |x| \leq \gamma\lambda\\
    0  & |x| > \gamma\lambda
\end{array}\right..
\end{equation}
If the conditions in \eqref{eq:sub_zero_F} are fulfilled, then $\Lmat^*$ is a stationary point of \eqref{minimization problem final}. Since the MCP penalty is nonconvex, we expect \eqref{minimization problem final} to have multiple minima, and we show that Algorithm \ref{alg:OuterAlg} converges to one of them. We initialize our algorithm with $\Lmat^{(0)}\in\mathcal{F}$ and keep $\Lmat^{(t)}\in\mathcal{F}$ in all iterations.

\subsection{Spectral Bounds}\label{sec10:spectral}
Let us first note that the $-\log\det(\Lmat+\Jmat)$ term in \eqref{minimization problem final} acts as a barrier function. That is, if \(\Lmat+\Jmat\) gets closer to being singular, then \(\det(\Lmat+\Jmat)\to0 \) and therefore, \(F(\Lmat)\to\infty\) following the $\log$ term. Therefore, we can say that as long as $F(\Lmat^{(t)})$ is bounded, there exists some $\beta_1 > 0$ for which \(\Lmat^{(t)}+\Jmat\succeq \beta_1 \Imat\) for all $t$. Furthermore, by \citep{ying2021minimax}, we have that for every Laplacian matrix $\Lmat$
\begin{equation}
\langle\Smat,\Lmat\rangle = \langle\mathbf{v},\mathbf{w}\rangle,
\end{equation}
where $\Lmat = \mathcal{P}\mathbf{w}, \;\;\mathbf{w}\geq 0$, and
\begin{equation}\label{eq:assumption}
\mathbf{v}_k = [\mathcal{P}^*\Smat]_k = \left[\mathcal{P}^*\frac{1}{n}\sum_{q=1}^n\mathbf{x}_q\mathbf{x}_q^T\right]_k = \frac{1}{n}\sum_{q=1}^{n}\left([\mathbf{x}_{q}]_i-[\mathbf{x}_q]_i\right)^2 > 0 
\end{equation}
holds with a very high probability, and $\mathcal{P}^*$ is the adjoint of $\mathcal{P}$ in \eqref{eq:adjoint P}. Following that, we have
\begin{equation}\label{eq:traceBound}
\langle \mathbf{v},\mathbf{w}\rangle \geq \min_j\{\mathbf{v}_j\}\sum_q{\mathbf{w}_q} = \frac{1}{2}\min_j\{\mathbf{v}_j\}\TR{\Lmat}.
\end{equation}
Now, for any $\Lmat\in\mathcal{F}$ such that $F(\Lmat) \leq F(\Lmat^{(0)})$ we have
$$
\TR{\Lmat\Smat} - \log|\Lmat|_+ +\rho\braro{\Lmat;\lambda} \leq F(\Lmat^{(0)}),
$$
and since $\rho\braro{\Lmat;\lambda} \geq 0$ we can write:
$$
\TR{\Lmat\Smat} \leq F(\Lmat^{(0)}) + \log|\Lmat|_+.
$$
Furthermore, using \eqref{eq:traceBound} and $\log|\Lmat|_+  \leq (p-1)\log(\lambda_{max}(\Lmat))$, we get
\be
\label{41}
\min_k\{\mathbf{v}_k\}\frac{1}{2}\lambda_{max}(\Lmat) \leq \min_k\{\mathbf{v}_k\}\frac{1}{2}\TR{\Lmat} \leq F(\Lmat^{(0)}) + (p-1)\log(\lambda_{max}(\Lmat)).
\ee
Now, since in \eqref{41} the left-had-side grows much faster than the right-hand-side, then there is and upper-bound on $\lambda_{max}(\Lmat)$, depending on $\Smat$ and $\Lmat^{(0)}$, assuming \eqref{eq:assumption}. Hence, we can conclude that if the iterations $\{\Lmat^{(t)}\}$ are monotonically decreasing starting from $\Lmat^{(0)}\in\mathcal{F}$, there exist upper and lower bounds
\begin{equation}
\beta_1\Imat \preceq \Lmat^{(t)}+\Jmat \preceq \beta_2\Imat.
\end{equation}
This also means that the level set $\mathcal{R} = \{\Lmat\in\mathcal{F}: F(\Lmat) \leq F(\Lmat^{(0)})\}$ is compact, and the Hessian is bounded, i.e., $\nabla^2f(\Lmat)=(\Lmat+\Jmat)^{-1}\otimes (\Lmat+\Jmat)^{-1} \preceq \frac{1}{\beta_1^2}\Imat = \theta\Imat$, where $\theta>0$. The last upper bound comes from the properties of eigenvalues of Kronecker products. 

We summarize the conclusions above in the following lemma:
\begin{lemma}\label{lem:bounds}
Assume that the iterations $\Lmat^{(t)}$ are monotonically decreasing with respect to the objective in \eqref{eq:minimization problem F}, starting from a feasible $\Lmat^{(0)}$. Then there exist constants $\beta_1, \beta_2$ such that $\beta_1\Imat \preceq \Lmat^{(t)}+\Jmat \preceq \beta_2\Imat$, and the level-set $\mathcal{R} = \{\Lmat\in\mathcal{F}: F(\Lmat) \leq F(\Lmat^{(0)})\}$ is compact. Also, the Hessian is bounded:  $\nabla^2f(\Lmat) \preceq  \frac{1}{\beta_1^2}\Imat = \theta\Imat$
\end{lemma}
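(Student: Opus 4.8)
The plan is to establish the two spectral bounds $\beta_1\Imat \preceq \Lmat^{(t)}+\Jmat \preceq \beta_2\Imat$ separately, and then obtain compactness of $\mathcal{R}$ and the Hessian bound as corollaries. The common starting observation is that monotonicity of the iterates gives $F(\Lmat^{(t)}) \leq F(\Lmat^{(0)})$, so every iterate lies in $\mathcal{R}$; it therefore suffices to bound the spectrum of $\Lmat+\Jmat$ uniformly over the level set $\mathcal{R} = \bracu{\Lmat\in\mathcal{F}: F(\Lmat)\leq F(\Lmat^{(0)})}$.

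First I would derive the upper bound $\beta_2$. From $F(\Lmat)\leq F(\Lmat^{(0)})$ and discarding the nonnegative penalty ($\rho_{MCP}\geq 0$), I get $\TR{\Lmat\Smat} \leq F(\Lmat^{(0)}) + \log|\Lmat|_+$. Using the identity $\langle\Smat,\Lmat\rangle = \langle\mathbf{v},\mathbf{w}\rangle$ from \cite{ying2021minimax} together with the elementwise positivity \eqref{eq:assumption} (valid with high probability), the left side is bounded below by $\tfrac{1}{2}\min_k\{\mathbf{v}_k\}\TR{\Lmat}$, while the right side is bounded above using $\log|\Lmat|_+ \leq (p-1)\log(\lambda_{max}(\Lmat))$. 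This produces inequality \eqref{41}, whose left-hand side grows linearly in $\lambda_{max}(\Lmat)$ while its right-hand side grows only logarithmically; hence there is a finite threshold beyond which the inequality cannot hold, yielding a uniform bound $\lambda_{max}(\Lmat+\Jmat)\leq\beta_2$ on $\mathcal{R}$.

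Next I would establish the lower bound $\beta_1$, and this is the step I expect to be the main obstacle. The term $-\log|\Lmat+\Jmat|$ acts as a barrier: as $\Lmat+\Jmat$ approaches singularity, $\det(\Lmat+\Jmat)\to 0$ and $F\to\infty$. Since $F(\Lmat)\leq F(\Lmat^{(0)})$ is bounded, $\det(\Lmat+\Jmat)$ must be bounded below by a positive constant. The delicate point is converting this determinant lower bound into a spectral lower bound. Recalling from the proof of Theorem \ref{Th1} that the eigenvalues of $\Lmat+\Jmat$ are $1,\lambda_2,\dots,\lambda_p$, and that all of them are already bounded above by $\beta_2$ from the previous step, the product $\det(\Lmat+\Jmat)$ cannot stay above its lower bound if the smallest eigenvalue shrinks toward zero. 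This forces $\lambda_{min}(\Lmat+\Jmat)\geq\beta_1>0$ uniformly on $\mathcal{R}$.

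Finally, compactness of $\mathcal{R}$ follows since $\mathcal{F}$ is closed (being defined by linear equalities and inequalities), $\mathcal{R}$ is bounded by the two spectral bounds just derived, and $\mathcal{R}$ is closed because the barrier precludes accumulation on the singular boundary, giving lower semicontinuity of $F$; boundedness and closedness in finite dimension then give compactness. The Hessian bound is immediate from the Kronecker structure in \eqref{grad and h}: the lower bound $\Lmat+\Jmat\succeq\beta_1\Imat$ gives $\braro{\Lmat+\Jmat}^{-1}\preceq\tfrac{1}{\beta_1}\Imat$, and since the eigenvalues of a Kronecker product are the pairwise products of the factors' eigenvalues, $\nabla^2 f(\Lmat) = \braro{\Lmat+\Jmat}^{-1}\otimes\braro{\Lmat+\Jmat}^{-1}\preceq\tfrac{1}{\beta_1^2}\Imat = \theta\Imat$.
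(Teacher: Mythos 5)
Your proposal is correct and follows essentially the same route as the paper's proof: the upper bound via the trace identity $\langle\Smat,\Lmat\rangle=\langle\mathbf{v},\mathbf{w}\rangle$ from \cite{ying2021minimax} and the linear-versus-logarithmic growth argument of \eqref{41}, the lower bound via the $-\log\det$ barrier, compactness of $\mathcal{R}$ from the two spectral bounds, and the Kronecker eigenvalue property for the Hessian bound. The one difference is ordering: the paper asserts $\beta_1$ directly from the barrier property before deriving $\beta_2$, whereas you derive $\beta_2$ first and then use it to convert the determinant lower bound into an eigenvalue lower bound, which actually makes that step tighter than the paper's presentation.
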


\subsection{Fixed Point Iteration}\label{sec10:fixedpoint}

The following lemma shows that if $\Lmat^{(t)}$ is feasible, and the quadratic minimization \eqref{eq:minimization problem G} ends with $\Delta^{(t)} = 0$, then $\Lmat^{(t)}$ is a minimum point of \eqref{eq:minimization problem F}. 

\begin{lemma}\label{lem:fixed_point}
Assume that $\Lmat^{(t)}\in\mathcal{F}$. If $\Delta^{(t)} = 0$ is a minimizer of $G(\Delta)$ in  \eqref{eq:minimization problem G}, then $\Lmat^{(t+1)}=\Lmat^{(t)}$ is a minimizer of $F(\Lmat)$ in \eqref{eq:minimization problem F}.
\end{lemma}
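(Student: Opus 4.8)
The plan is to compare the first-order optimality conditions of the two problems and show they coincide when $\Delta^{(t)}=0$. First I would write the stationarity (KKT) conditions for the constrained quadratic subproblem \eqref{eq:minimization problem G}. Because the feasible set $\{(\Lmat^{(t)}+\Delta)\in\mathcal{F}\}$ is the set $\mathcal{F}$ merely translated by the fixed point $\Lmat^{(t)}$, these conditions share exactly the structure of \eqref{eq:sub_zero_F}: for every off-diagonal pair $i>j$ there is a multiplier $\mu_{i,j}\ge 0$ for the constraint $(\Lmat^{(t)}+\Delta)_{i,j}\le 0$ together with complementary slackness, the symmetry relation is enforced, and the diagonal is fixed by the degree constraint $(\Lmat^{(t)}+\Delta)\onevec=\zerovec$.

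The crux of the argument is what happens to the gradient of $G$ at $\Delta=0$. The gradient of the smooth quadratic part of $G$ is $\nabla f(\Lmat^{(t)}) + \Mmat^{(t)}\Delta$, and its quadratic contribution $\Mmat^{(t)}\Delta$ vanishes identically at $\Delta=0$, regardless of the choice of $\Mmat^{(t)}$. Likewise, the subdifferential of $\rho_{MCP}(\Lmat^{(t)}+\Delta;\lambda)$ with respect to $\Delta_{i,j}$ evaluated at $\Delta=0$ is exactly the subdifferential \eqref{gMCP} of the same MCP term evaluated at $\Lmat^{(t)}_{i,j}$. Hence the stationarity condition for $G$ at $\Delta=0$ reduces to $0\in[\nabla f(\Lmat^{(t)})]_{i,j} + \partial_{\Lmat_{i,j}}\mathrm{MCP}(\Lmat^{(t)}_{i,j};\gamma,\lambda) + \mu_{i,j}$, which is precisely $0\in \frac{\partial F(\Lmat^{(t)})}{\partial\Lmat_{i,j}} + \mu_{i,j}$, i.e. the stationarity line of \eqref{eq:sub_zero_F} with $\Lmat^*=\Lmat^{(t)}$.

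Then I would collect the remaining conditions: since $\Delta^{(t)}=0$ we have $\Lmat^{(t)}+\Delta^{(t)}=\Lmat^{(t)}\in\mathcal{F}$, so the off-diagonal sign constraints, symmetry, and degree constraint of \eqref{eq:sub_zero_F} are inherited directly, and the multipliers $\mu_{i,j}\ge 0$ together with their complementary slackness transfer verbatim. Therefore all conditions of \eqref{eq:sub_zero_F} hold at $\Lmat^{(t)}$, so $\Lmat^{(t)}$ is a stationary point of $F$; and since the line-search \eqref{eq:linesearch} applied to the zero direction gives $\Lmat^{(t+1)}=\Lmat^{(t)}+\alpha^{(t)}\cdot\zerovec=\Lmat^{(t)}$, the claim follows.

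The main obstacle is that the MCP penalty is concave, so I cannot argue through convex duality as in the $\ell_1$ case, and ``minimizer of $G$'' must be read as a point satisfying the necessary first-order conditions. The delicate point is to justify that $\Delta=0$ being a minimizer of the (possibly nonconvex) $G$ still yields the stationarity inclusion above; I would invoke the standard necessary optimality condition for constrained minimization with a locally Lipschitz nonsmooth term, using that the subdifferential of $\mathrm{MCP}$ at $0$ is the full interval $[-\lambda,\lambda]$ exactly as in \eqref{gMCP}. A cleaner route that sidesteps the concavity issue is a contradiction argument: $G$ and $F$ share the same first-order behavior at $\Lmat^{(t)}$ (identical linear term $\langle\nabla f(\Lmat^{(t)}),\Delta\rangle$ and identical nonsmooth increment $\rho_{MCP}(\Lmat^{(t)}+\Delta)-\rho_{MCP}(\Lmat^{(t)})$, the only difference being the $O(\|\Delta\|^2)$ quadratic term), so any feasible direction strictly decreasing $F$ from $\Lmat^{(t)}$ would, for a sufficiently small step, also strictly decrease $G$ below $G(0)$, contradicting the optimality of $\Delta=0$.
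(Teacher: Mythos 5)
Your proposal follows essentially the same route as the paper's proof: write the first-order (KKT) conditions for the constrained subproblem $G$, observe that at $\Delta^{(t)}=0$ the term $\Mmat^{(t)}\Delta$ vanishes regardless of $\Mmat^{(t)}$ and the MCP subdifferential of $\rho_{MCP}(\Lmat^{(t)}+\Delta;\lambda)$ evaluated at $\Delta=0$ coincides with that of $\rho_{MCP}(\Lmat;\lambda)$ at $\Lmat^{(t)}$, so the conditions reduce exactly to \eqref{eq:sub_zero_F} and $\Lmat^{(t)}$ is a stationary point of $F$. Your added care about reading ``minimizer'' as a first-order stationary point (due to the concavity of MCP), and the alternative small-step contradiction argument, are sound refinements but do not change the substance of the argument.
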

\begin{proof}
Since $\Delta^{(t)}$ is a minimizer of \eqref{eq:minimization problem G}, the following optimality conditions are held
\begin{equation}\label{eq:sub_zero_G}
\left\{
\begin{array}{ll}
0 \in \frac{\partial G(\Delta^{(t)})}{\partial \Delta_{i,j}} + \mu_{i,j},& i>j\\
\Lmat^{(t)}_{j,i} + \Delta^{(t)}_{j,i}=\Lmat^{(t)}_{i,j} + \Delta^{(t)}_{i,j}\geq 0,& i\neq j\\
\Lmat^{(t)}_{i,i} + \Delta^{(t)}_{i,i} = \sum_{j\neq i}-(\Lmat^{(t)}_{i,j} + \Delta^{(t)}_{i,j})& i=1,...,p
\end{array}\right.,
\end{equation}
where $\frac{\partial G(\Delta)}{\partial \Delta_{i,j}} = [\nabla f]_{i,j} + [\langle\Mmat\Delta\rangle]_{i,j}+ \frac{\partial MCP(\Lmat^{(t)}_{i,j}+\Delta_{i,j};\gamma,\lambda)}{\partial \Delta_{ij}}$, and $\langle\Mmat\Delta\rangle$ denotes a matrix vector multiplication of $\Mmat$ with the vectorized $\Delta$, i.e., $\mbox{mat}(\Mmat\cdot\mbox{vec}(\Delta))$. Placing $\Delta^{(t)} = 0$ in \eqref{eq:sub_zero_G} yields the same conditions as in \eqref{eq:sub_zero_F} regardless of the specific choice of the matrix $\Mmat\succ 0$, hence $\Lmat^{(t)}$ is a stationary point of \eqref{eq:minimization problem F}.
\end{proof}

\subsection{Sufficient Decrease in Objective}\label{sec10:decrease}

We now show that we have a decrease in the objective $F$ following the minimization in \eqref{eq:minimization problem G} and the linesearch in \eqref{eq:linesearch}. This lemma is the key to our convergence proof. 
\begin{lemma}\label{lem:decrease}
Assume that the Hessian is bounded, $\nabla^2{f} \preceq  \theta\Imat$ for $\theta>0$. Also assume that the iterates $\{\Lmat^{(t)}\}$ are defined by \eqref{eq:minimization problem G} followed by a linesearch \eqref{eq:linesearch}, starting from $\Lmat^{(0)}\in\mathcal{F}$, with $\Mmat^{(t)}$ satisfying $\Mmat^{(t)} \succeq \lambda^{\Mmat}_{min}\Imat \succ \gamma^{-1}\Imat$, where $\lambda^{\Mmat}_{min}$ is a constant and $\gamma$ is the MCP parameter. Then
\begin{equation}
\label{eq:monotonicityNeed}
F(\Lmat^{(t+1)}) - F(\Lmat^{(t)}) \leq -c\cdot \|\Delta^{(t)}\|_F^2,
\end{equation}
where $c>0$. Furthermore, the linesearch parameter $\alpha^{(t)}$ in \eqref{eq:linesearch} can be chosen to be bounded away from zero, i.e., $\alpha^{(t)} \geq \alpha_{min} > 0$.
\end{lemma}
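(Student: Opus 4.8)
The plan is to follow the QUIC-type argument of \cite{hsieh2014quic}, but to replace every appeal to convexity of the penalty—which is unavailable here—by a difference-of-convex (DC) treatment of the MCP combined with the curvature surplus supplied by the hypothesis $\Mmat^{(t)}\succ\gamma^{-1}\Imat$. Throughout I write $\Delta=\Delta^{(t)}$, $\Mmat=\Mmat^{(t)}$, and let $D \define \langle\nabla f(\Lmat^{(t)}),\Delta\rangle + \rho_{MCP}(\Lmat^{(t)}+\Delta;\lambda) - \rho_{MCP}(\Lmat^{(t)};\lambda)$ be the surrogate decrease driving the Armijo test. First I would show that the inner objective $G$ in \eqref{eq:minimization problem G} is strongly convex. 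The only source of nonconvexity is the piece $-x^2/(2\gamma)$ of each scalar $MCP$ term in \eqref{MCP}; on the coordinates where $\rho_{MCP}$ acts its generalized Hessian is bounded below by $-\gamma^{-1}\Imat$, so the Hessian of $G$ dominates $\Mmat-\gamma^{-1}\Imat\succeq(\lambda^{\Mmat}_{min}-\gamma^{-1})\Imat$. By hypothesis $\lambda^{\Mmat}_{min}>\gamma^{-1}$, hence $G$ is strongly convex with modulus $m \define \lambda^{\Mmat}_{min}-\gamma^{-1}>0$ on the convex feasible set $\{\Delta:\Lmat^{(t)}+\Delta\in\mathcal{F}\}$. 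Since $\Lmat^{(t)}\in\mathcal{F}$, the point $\Delta=0$ is feasible, so comparing the minimizer $\Delta$ with $0$ via strong convexity gives $G(\Delta)-G(0)\le-\tfrac m2\|\Delta\|_F^2$. Unfolding $G(0)=F(\Lmat^{(t)})$ and using $\langle\Delta,\Delta\rangle_{\Mmat}\ge\lambda^{\Mmat}_{min}\|\Delta\|_F^2\ge\gamma^{-1}\|\Delta\|_F^2$ yields the sharp form I will need:
\[
D + \tfrac{1}{2\gamma}\|\Delta\|_F^2 \le D + \tfrac12\langle\Delta,\Delta\rangle_{\Mmat} \le -\tfrac m2\|\Delta\|_F^2 .
\]

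Next I would bound $F$ along the ray $\Lmat^{(t)}+\alpha\Delta$, $\alpha\in(0,1]$. For the smooth part I use the bound $\nabla^2 f\preceq\theta\Imat$ of Lemma \ref{lem:bounds} to write $f(\Lmat^{(t)}+\alpha\Delta)-f(\Lmat^{(t)})\le\alpha\langle\nabla f,\Delta\rangle+\tfrac{\theta\alpha^2}{2}\|\Delta\|_F^2$. For the penalty I use the DC decomposition $MCP(x)=\lambda|x|-h(x)$, where $h$ is the convex $C^1$ Huber function with $0\le h''\le\gamma^{-1}$; equivalently $\rho_{MCP}=\rho_1-\rho_2$, with $\rho_1$ the $\ell_1$ part (convex) and $\rho_2$ the separable Huber part (convex, curvature $\le\gamma^{-1}$). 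Convexity of $\rho_1$ gives the chord bound $\rho_1(\Lmat^{(t)}+\alpha\Delta)-\rho_1(\Lmat^{(t)})\le\alpha[\rho_1(\Lmat^{(t)}+\Delta)-\rho_1(\Lmat^{(t)})]$, while the elementary secant-gap estimate for a convex $\psi(\alpha)=\rho_2(\Lmat^{(t)}+\alpha\Delta)$ of bounded curvature, $(1-\alpha)\psi(0)+\alpha\psi(1)-\psi(\alpha)\le\tfrac{1}{2\gamma}\alpha(1-\alpha)\|\Delta\|_F^2$, rearranges to $\rho_{MCP}(\Lmat^{(t)}+\alpha\Delta)-\rho_{MCP}(\Lmat^{(t)})\le\alpha[\rho_{MCP}(\Lmat^{(t)}+\Delta)-\rho_{MCP}(\Lmat^{(t)})]+\tfrac{\alpha}{2\gamma}\|\Delta\|_F^2$. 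Adding the two estimates and invoking the sharp bound from the first step gives
\[
F(\Lmat^{(t)}+\alpha\Delta)-F(\Lmat^{(t)}) \le \alpha\Big(D+\tfrac{1}{2\gamma}\|\Delta\|_F^2\Big)+\tfrac{\theta\alpha^2}{2}\|\Delta\|_F^2 \le -\tfrac{\alpha m}{2}\|\Delta\|_F^2 + \tfrac{\theta\alpha^2}{2}\|\Delta\|_F^2 .
\]

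Finally I would convert this into the two claims. The displayed estimate shows the backtracking Armijo test (with contraction factor in $(0,1)$ and sufficient-decrease constant $\sigma$ chosen below $1-1/(\gamma\lambda^{\Mmat}_{min})$, so that $(1-\sigma)\lambda^{\Mmat}_{min}>\gamma^{-1}$) is passed for every $\alpha\le\bar\alpha$, where $\bar\alpha>0$ depends only on $m,\theta,\gamma,\sigma$; hence the accepted step satisfies $\alpha^{(t)}\ge\alpha_{min}>0$, proving the "bounded away from zero" claim. Substituting $\alpha^{(t)}\ge\alpha_{min}$ and restricting to $\alpha\le m/(2\theta)$ (so that $m-\theta\alpha\ge m/2$) turns the estimate into $F(\Lmat^{(t+1)})-F(\Lmat^{(t)})\le-c\|\Delta^{(t)}\|_F^2$ with $c\define\tfrac{m\,\alpha_{min}}{4}>0$, which is \eqref{eq:monotonicityNeed}.

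The main obstacle—and the genuine departure from the $\ell_1$ analysis of \cite{hsieh2014quic}—is the penalty estimate in the second paragraph: the convex chord inequality is lost, and the DC/secant-gap argument necessarily produces the extra first-order slack $\tfrac{\alpha}{2\gamma}\|\Delta\|_F^2$. The crux is to observe that this slack is \emph{precisely} absorbed by the curvature surplus $\tfrac12\langle\Delta,\Delta\rangle_{\Mmat}\ge\tfrac{1}{2\gamma}\|\Delta\|_F^2$ that the hypothesis $\Mmat^{(t)}\succ\gamma^{-1}\Imat$ guarantees; this is exactly why that hypothesis is stated, and why too small a choice of $\Mmat$ would break sufficient decrease. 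A secondary technical point I would handle carefully is that $\nabla^2 f\preceq\theta\Imat$ is only available on the compact level set of Lemma \ref{lem:bounds}: I would argue, as is standard, that the monotone linesearch keeps the iterates (and, after enough backtracking, the evaluated trial points) inside this set, so that the barrier $-\log|\Lmat+\Jmat|$ stays finite and the Hessian bound applies along the whole segment.
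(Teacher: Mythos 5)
Your proof is correct, but it reaches the decrease bound by a genuinely different route than the paper. The paper's proof is a QUIC-style argument adapted to MCP: it compares the minimizer $\Delta^{(t)}$ of \eqref{eq:minimization problem G} against the scaled point $\alpha\Delta^{(t)}$, divides by $(1-\alpha)$ to bound $\langle\nabla f,\Delta\rangle$ (its Eq. \eqref{eq:upper_boud_first}), and then handles the nonconvex penalty via an \emph{exact} second-order representation of MCP differences with matrices $\Omega,\Omega_\alpha$ of interpolation weights (Eqs. \eqref{eq:MCP_Approx}--\eqref{eq:Omega_alpha}); crucially, that representation is only valid because feasibility in $\mathcal{F}$ preserves the signs of the off-diagonal entries along the step. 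You instead (i) observe that the hypothesis $\Mmat^{(t)}\succ\gamma^{-1}\Imat$ together with the $\gamma^{-1}$-semiconvexity of MCP makes the inner objective $G$ strongly convex with modulus $m=\lambda^{\Mmat}_{min}-\gamma^{-1}$, and compare the minimizer against $\Delta'=0$, and (ii) replace the $\Omega$-expansions by the global DC decomposition $\mathrm{MCP}=\ell_1-\mathrm{Huber}$, using the chord bound for the convex part and a secant-gap estimate for the Huber part. Both routes land on the same final structure $F(\Lmat^{(t)}+\alpha\Delta)-F(\Lmat^{(t)})\leq-\alpha\left(\mathrm{const}-\theta\alpha\right)\|\Delta\|_F^2$ with $\mathrm{const}>0$ exactly when $\lambda^{\Mmat}_{min}>\gamma^{-1}$, and both yield the step-size lower bound the same way. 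What each buys: your version is more modular, makes the role of the hypothesis $\Mmat^{(t)}\succ\gamma^{-1}\Imat$ transparent (it is literally the strong-convexity surplus that absorbs the $\tfrac{\alpha}{2\gamma}\|\Delta\|_F^2$ slack), and, since the DC decomposition holds globally, it does not need the sign-preservation property of $\mathcal{F}$ at all—so it extends verbatim to MCP-penalized proximal schemes without Laplacian structure; the paper's version exploits that structure to get an exact (rather than worst-case) curvature term, which gives a marginally sharper constant (your bound loses roughly a factor of two, e.g.\ $c=\tfrac{m\,\alpha_{min}}{4}$ versus the paper's $\alpha(\lambda^{\Mmat}_{min}-\gamma^{-1}-\alpha\theta)$), though this is immaterial for the convergence theorem. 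One caveat you flag—that $\nabla^2 f\preceq\theta\Imat$ is only guaranteed on the level set, so the quadratic upper bound on $f$ along the trial segment needs the standard compactness/barrier argument—is shared by the paper's own proof, which hides the same issue inside an $O(\|\Delta\|^3)$ Taylor remainder, so it is not a gap relative to the paper.
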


\begin{proof}
For $\Delta^{(t)}$, the solution of \eqref{eq:minimization problem G}, and some linesearch parameter $0<\alpha<1$ we have:
\begin{equation}
\begin{array}{ll}
\big\langle\nabla f(\Lmat^{(t)}), \Delta\big\rangle + \frac{1}{2}\big\langle\Delta^{(t)}, \Delta^{(t)}\rangle_{\Mmat^{(t)}} + \rho_{MCP}\braro{\Lmat^{(t)}+\Delta^{(t)};\lambda} \\\\ \quad \quad \leq  
\langle\nabla f(\Lmat^{(t)}), \alpha\Delta^{(t)}\rangle + \frac{1}{2}\langle\alpha\Delta^{(t)}, \alpha\Delta^{(t)}\rangle_{\Mmat^{(t)}} + \rho_{MCP}\braro{\Lmat^{(t)}+\alpha\Delta^{(t)};\lambda}.
\end{array}
\end{equation}
That is because the objective in \eqref{eq:minimization problem G} is lower for $\Delta^{(t)}$ than for $\alpha\Delta^{(t)}$ (because $\Delta^{(t)}$ is assumed to be the minimum). Also, since $\mathcal{F}$ is convex, then $\Lmat^{(t)} + \alpha\Delta\in \mathcal{F}$. Thus, we have
\begin{equation}
(1-\alpha)\big\langle\nabla f(\Lmat^{(t)}), \Delta\big\rangle + (1-\alpha^2)\frac{1}{2}\big\langle\Delta, \Delta\rangle_{\Mmat^{(t)}} + \rho_{MCP}\braro{\Lmat^{(t)}+\Delta;\lambda} - \rho_{MCP}\braro{\Lmat^{(t)}+\alpha\Delta;\lambda}\leq 0.
\end{equation}
After dividing by $(1-\alpha) > 0$ we get:
\begin{equation}\label{eq:upper_boud_first}
\big\langle\nabla f(\Lmat^{(t)}), \Delta\big\rangle \leq -(1+\alpha)\frac{1}{2}\big\langle\Delta, \Delta\rangle_{\Mmat^{(t)}} - \frac{1}{1-\alpha}\left(\rho_{MCP}\braro{\Lmat^{(t)}+\Delta;\lambda} - \rho_{MCP}\braro{\Lmat^{(t)}+\alpha\Delta;\lambda}\right).
\end{equation}

Now, by the definition of $MCP$ in \eqref{MCP}, for every $x,y$ such that $sign(x) = sign(x+y)$ the function is twice differentiable in the interval $(x,x+y)$, and we can write 
\begin{equation}\label{eq:MCP_Approx}
MCP(x+y) = MCP(x) +MCP'(x) y - \frac{1}{2\gamma}(\omega y)^2, 
\end{equation}
where $0\leq\omega\leq 1$. %That is because part of the change in $x$ following the addition of $y$ may end in 0 or in the constant part of the MCP function. For example, if $|x| < \gamma\lambda$, and $|x+y| > \gamma\lambda$, then $\omega = \frac{\gamma\lambda}{|x+y|}$, becasue moving from $x+\omega y$ towards $x+y$, the MCP function is constant. In other words, $\omega$ is chosen to be the smallest positive scalar such that $MCP(x+y) = MCP(x+\omega y)$, assuming the sign of $x$ and $x+y$ does not change.
Given that the feasible $\mathcal{F}$ states that $\Lmat^{(t)}\leq 0$ and $\Lmat^{(t)}+\Delta\leq 0$ at all times, we can write
\begin{equation}\label{eq:Omega}
\rho_{MCP}\braro{\Lmat^{(t)}+\Delta;\lambda} = \rho_{MCP}\braro{\Lmat^{(t)};\lambda} + \langle \nabla\rho_{MCP},\Delta\rangle - \frac{1}{2\gamma}\langle \Omega\odot\Delta, \Omega\odot\Delta\rangle
\end{equation}
where $0\leq\Omega \leq 1$ is a matrix of the $\omega$ values as in \eqref{eq:MCP_Approx}, and $\odot$ is an element-wise product. Similarly, let 
\begin{equation}\label{eq:Omega_alpha}
\rho_{MCP}\braro{\Lmat^{(t)}+\alpha\Delta;\lambda} = \rho_{MCP}\braro{\Lmat^{(t)};\lambda} + \langle \nabla\rho_{MCP},\alpha\Delta\rangle - \frac{1}{2\gamma}\langle \Omega_\alpha\odot\alpha\Delta, \Omega_\alpha\odot\alpha\Delta\rangle,
\end{equation}
where $0\leq\Omega_\alpha \leq 1$. %We also know that for the same search direction $\Delta$ we have $\Omega\leq\Omega_\alpha$ since both changes from $\Lmat^{(t)}$ are in the same direction, and the full step is shrunk more (hence, $\Omega$ is lower).
To show a decrease in the function $F$ following the solution of \eqref{eq:minimization problem G} and a linesearch, we write
\begin{align}
F(\Lmat^{(t)} + \alpha \Delta^{(t)}) - F(\Lmat^{(t)})  =& f(\Lmat^{(t)} + \alpha \Delta^{(t)}) + \rho_{MCP}\braro{\Lmat^{(t)}+\alpha\Delta;\lambda}  - f(\Lmat^{(t)})  - \rho_{MCP}\braro{\Lmat^{(t)};\lambda} \\
=& \label{eq:Temp50}\langle\nabla f(\Lmat^{(t)}), \alpha\Delta^{(t)}\rangle + \frac{1}{2}\langle\alpha\Delta^{(t)}, \alpha\Delta^{(t)}\rangle_{\nabla^2f(\Lmat^{(t)})} + O(\|\Delta\|^3) \\ \nonumber
& + \rho_{MCP}\braro{\Lmat^{(t)}+\alpha\Delta;\lambda}
- \rho_{MCP}\braro{\Lmat^{(t)};\lambda}\\
\leq & \label{eq:Temp51} -\frac{\alpha(1+\alpha)}{2}\big\langle\Delta, \Delta\rangle_{\Mmat^{(t)}} + \frac{1}{2}\langle\alpha\Delta^{(t)}, \alpha\Delta^{(t)}\rangle_{\nabla^2f(\Lmat^{(t)})} + O(\|\Delta\|^3)\\\nonumber
& - \frac{\alpha}{1-\alpha}\left(\rho_{MCP}\braro{\Lmat^{(t)}+\Delta;\lambda} - \rho_{MCP}\braro{\Lmat^{(t)}+\alpha\Delta;\lambda}\right) \\ \nonumber
 &+ \rho_{MCP}\braro{\Lmat^{(t)}+\alpha\Delta;\lambda}
- \rho_{MCP}\braro{\Lmat^{(t)};\lambda}.
\end{align}
Equation \eqref{eq:Temp50} is a simple Taylor expansion, and the inequality in \eqref{eq:Temp51} stems from \eqref{eq:upper_boud_first}. At this point, we will focus on the last two lines of \eqref{eq:Temp51}:
\begin{align}
& - \frac{\alpha}{1-\alpha}\left(\rho_{MCP}\braro{\Lmat^{(t)}+\Delta;\lambda} - \rho_{MCP}\braro{\Lmat^{(t)}+\alpha\Delta;\lambda}\right) 
 + \rho_{MCP}\braro{\Lmat^{(t)}+\alpha\Delta;\lambda}
- \rho_{MCP}\braro{\Lmat^{(t)};\lambda}\\
&= \frac{1}{1-\alpha}\left((1-\alpha+\alpha)\rho_{MCP}\braro{\Lmat^{(t)}+\alpha\Delta;\lambda} - \alpha\rho_{MCP}\braro{\Lmat^{(t)}+\Delta;\lambda} \right) - \rho_{MCP}\braro{\Lmat^{(t)};\lambda}\\
&= \frac{1}{1-\alpha}\left((1-\alpha)\rho_{MCP}\braro{\Lmat^{(t)};\lambda}  - \frac{1}{2\gamma}\langle \alpha^2\Omega_\alpha^2 - \alpha\Omega^2\odot\Delta, \Delta\rangle\right) - \rho_{MCP}\braro{\Lmat^{(t)};\lambda}\\
&= \frac{1}{1-\alpha}\left(- \frac{1}{2\gamma}\langle \alpha^2\Omega_\alpha^2 - \alpha\Omega^2\odot\Delta, \Delta\rangle\right)
\leq \frac{\alpha}{2\gamma}\langle\max\{\Omega_\alpha^2, \Omega^2\}\odot\Delta, \Delta\rangle \leq \frac{\alpha}{2\gamma}\langle\Delta, \Delta\rangle,\label{eq:OmegaAdvantage}
\end{align}
where $\Omega_\alpha$ and $\Omega$ are defined in \eqref{eq:Omega}-\eqref{eq:Omega_alpha}. Going back to the bound on the decrease in $F$ in \eqref{eq:Temp51} we get:
\begin{align}
F(\Lmat^{(t)} + \alpha \Delta^{(t)}) - F(\Lmat^{(t)})  \leq&   -\frac{\alpha(1+\alpha)}{2}\big\langle\Delta, \Delta\rangle_{\Mmat^{(t)}} + \frac{1}{2}\langle\alpha\Delta^{(t)}, \alpha\Delta^{(t)}\rangle_{\nabla^2f(\Lmat^{(t)})} +\frac{\alpha}{2\gamma}\langle\Delta, \Delta\rangle+ O(\|\Delta\|^3)\\ \label{eq:upper decrease}
=& -\frac{\alpha}{2}\langle\Delta,\Delta\rangle_{(\Mmat^{(t)} - \gamma^{-1}\Imat)} - \frac{\alpha^2}{2}\langle\Delta,\Delta\rangle_{(\Mmat^{(t)} - \nabla^2f(\Lmat^{(t)}))} + O(\|\Delta\|^3).
\end{align}
Two conditions are needed to keep the above bound negative and guarantee a decrease in the objective. One is by having $\Mmat^{(t)}\succ \gamma^{-1}\Imat$, which is easy to guarantee, e.g., via the choice of $\varepsilon$ in our case. The other condition, assuming that we have $\Mmat^{(t)} \succ \gamma^{-1}\Imat$, can be written as 
$$\alpha(\Mmat-\gamma^{-1}) + \alpha^2\nabla^2f(\Lmat^{(t)})\succ 0,$$
and then we have
\begin{equation}\label{eq:monotonicity}
F(\Lmat^{(t)} + \alpha \Delta^{(t)}) - F(\Lmat^{(t)}) \leq -\alpha(\lambda_{min}^{\Mmat} - \frac{1}{\gamma} - \alpha \theta) \|\Delta^{(t)}\|_F^2.
\end{equation}
where $\theta$ is the upper bound on the Hessian, and $\lambda_{min}^{\Mmat} > \gamma^{-1}$ is the smallest eigenvalue of $\Mmat$. This value is negative for any
$
0\leq\alpha_{min} < \frac{\lambda_{min}^{\Mmat} - \gamma^{-1}}{\theta}
$.
\end{proof}

The result above is relevant to Algorithm \ref{alg:OuterAlg}, where we have $\Mmat^{(t)} = \nabla^2 f(\Lmat^{(t)}) + \mbox{diag}(\epsilon^2)$. Hence, the second term in \eqref{eq:upper decrease} almost vanishes if $\epsilon$ is small. On the other hand, $\epsilon$ allows us to make sure that the first term in \eqref{eq:upper decrease} remains negative. We note that one can also monitor which entries are in the concave region of the MCP (smaller than $\gamma\lambda$ in magnitude) and set some value of $\varepsilon_{i,j}$ for them only, following \eqref{eq:OmegaAdvantage}.  

The analysis above essentially is also suitable for Algorithm \ref{algorithm for Newton problem} if we set $\beta_t=0$, i.e., use a projected quasi-Newton method with $\Dmat$ as a diagonal approximation to the Hessian.
This is because in Algorithm \ref{algorithm for Newton problem}, the Newton problem is also a smooth objective with an MCP prior and positivity constraints, and the change of variables to $\delta$ is only for convenience of symmetry and does not change the problem. 

\subsection{Convergence}\label{sec10:convergence}

Before stating the main convergence theorem, we prove the following auxiliary lemma:

\begin{lemma} \label{lem:subseries} 
Let $\{\Lmat^{(t)}\}$ be series of points produced by \eqref{eq:minimization problem G} followed by the linesearch \eqref{eq:linesearch}, with $\lambda^{\Mmat}_{max}\Imat\preceq \Mmat^{(t)} \preceq \lambda^{\Mmat}_{min}\Imat \preceq 0$. Let $\{\Lmat^{(t_j)}\}$ be any converging subseries of $\{\Lmat^{(t)}\}$ and let $\bar\Lmat$ be its limit. Then $\bar\Lmat$ is a stationary point of $F$ in \eqref{minimization problem final}. 
\end{lemma}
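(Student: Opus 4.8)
The plan is to run the standard monotone-descent convergence argument: extract global information from the sufficient decrease, and then pass to the limit in the first-order optimality conditions along the given subsequence. I would begin by showing that $\{F(\Lmat^{(t)})\}$ converges. By Lemma \ref{lem:decrease} the sequence is monotonically nonincreasing, and by Lemma \ref{lem:bounds} every iterate stays in the compact level set $\mathcal{R}$, on which the continuous function $F$ is bounded below. Hence $F(\Lmat^{(t)})$ converges and the consecutive differences $F(\Lmat^{(t)}) - F(\Lmat^{(t+1)})$ tend to zero. Combining this with the sufficient-decrease bound \eqref{eq:monotonicityNeed}, $c\,\|\Delta^{(t)}\|_F^2 \leq F(\Lmat^{(t)}) - F(\Lmat^{(t+1)})$, I conclude $\|\Delta^{(t)}\|_F \to 0$. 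In particular, along the converging subsequence $\Lmat^{(t_j)} \to \bar\Lmat$ we have $\Delta^{(t_j)} \to 0$, and $\bar\Lmat \in \mathcal{F}$ because $\mathcal{F}$ is closed and each iterate is feasible.

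Next I would write the first-order optimality conditions \eqref{eq:sub_zero_G} satisfied by each minimizer $\Delta^{(t_j)}$ of \eqref{eq:minimization problem G}: for every pair $i>j$ there exist a subgradient selection $g^{(t_j)}_{i,j} \in \partial MCP(\Lmat^{(t_j)}_{i,j} + \Delta^{(t_j)}_{i,j})$ as in \eqref{gMCP} and a multiplier $\mu^{(t_j)}_{i,j}\geq 0$ (with complementary slackness) such that $[\nabla f(\Lmat^{(t_j)})]_{i,j} + [\langle\Mmat^{(t_j)}\Delta^{(t_j)}\rangle]_{i,j} + g^{(t_j)}_{i,j} + \mu^{(t_j)}_{i,j} = 0$. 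The idea is to send $j\to\infty$ term by term. The Hessian-approximation term vanishes, $[\langle\Mmat^{(t_j)}\Delta^{(t_j)}\rangle]_{i,j}\to 0$, since $\Mmat^{(t_j)}$ is uniformly bounded (its spectrum lies between the stated constants) and $\Delta^{(t_j)}\to 0$. The gradient term converges, $\nabla f(\Lmat^{(t_j)})=\Smat-\Qmat^{(t_j)}\to\Smat-(\bar\Lmat+\Jmat)^{-1}=\nabla f(\bar\Lmat)$, where continuity of the inverse is guaranteed by the lower spectral bound $\bar\Lmat + \Jmat\succeq\beta_1\Imat$ inherited from Lemma \ref{lem:bounds}, so that $\bar\Lmat+\Jmat$ is nonsingular.

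The crux is passing to the limit in the nonsmooth, set-valued parts. The subgradients satisfy $g^{(t_j)}_{i,j}\in[-\lambda,\lambda]$ by \eqref{gMCP}, and solving the optimality equation for $\mu^{(t_j)}_{i,j}$ shows the multipliers are bounded as well; hence, after passing to a further subsequence, both $g^{(t_j)}_{i,j}\to\bar g_{i,j}$ and $\mu^{(t_j)}_{i,j}\to\bar\mu_{i,j}\geq 0$ converge. I then invoke the closed-graph (outer semicontinuity) property of the MCP subdifferential \eqref{gMCP}: since the arguments $\Lmat^{(t_j)}_{i,j}+\Delta^{(t_j)}_{i,j}\to\bar\Lmat_{i,j}$, the limit obeys $\bar g_{i,j}\in\partial MCP(\bar\Lmat_{i,j})$, and complementary slackness passes to the limit by continuity. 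The resulting limiting relations are exactly the stationarity conditions \eqref{eq:sub_zero_F} for $\bar\Lmat$, so $\bar\Lmat$ is a stationary point of \eqref{minimization problem final}.

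I expect the main obstacle to be precisely this last step: handling the set-valuedness of $\partial MCP$ at the kink $x=0$ and verifying that the Lagrange multipliers remain bounded, so that the closed-graph argument applies and no subgradient mass escapes in the limit. Everything else reduces to the routine bookkeeping of monotone descent combined with the compactness and continuity facts supplied by Lemma \ref{lem:bounds}.
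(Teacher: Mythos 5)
Your proof is correct and follows essentially the same route as the paper's: monotone decrease plus the sufficient-decrease bound from Lemma \ref{lem:decrease} gives $\|\Delta^{(t)}\|_F \to 0$, and then one passes to the limit in the subproblem optimality conditions \eqref{eq:sub_zero_G} to recover the stationarity conditions \eqref{eq:sub_zero_F} at $\bar\Lmat$. The only difference is one of detail: where the paper compresses the final step into a single appeal to Lemma \ref{lem:fixed_point}, you carry out explicitly the outer-semicontinuity of the MCP subdifferential, the boundedness of the multipliers, and the limit of the complementary-slackness relations, which is a more careful rendering of exactly the argument the paper intends.
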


\begin{proof}
Since the sub-series \(\{\Lmat^{(t_j)}\}\) convergences to \(\bar{\Lmat}\), then \(\{F(\Lmat^{(t_j)})\}\) convergences to \(F(\bar{\Lmat})\). Therefore, \(\{F(\Lmat^{(t_j+1)})-F(\Lmat^{(t_j)})\}\rightarrow0\), which implies following \eqref{eq:monotonicity} that \(||\Lmat^{(t_j)} - \Lmat^{(t_j+1)}||^2_F \rightarrow 0\) and \(\lim_{j\rightarrow \infty}\Lmat^{(t_j+1)}=\bar{\Lmat}\). According to \eqref{eq:minimization problem G} and the linesearch \eqref{eq:linesearch}, \(\Lmat^{(t+1)} = \Lmat^{(t)}+\alpha^{(t)}\Delta^{(t)}\) and because, \(||\Lmat^{(t_j+1)} - \Lmat^{(t_j)}||^2_F \rightarrow 0\) and $\alpha \geq \alpha_{min} > 0$, this implies \(||\Delta^{(t)}||^2_F\rightarrow 0\). We know that \(\Delta^{(t)}\) satisfies \eqref{eq:sub_zero_G}, and by taking the limit as \(j\rightarrow\infty\) and using Lemma \ref{lem:fixed_point} we get that \(\bar{\Lmat}\) is a stationary point of \eqref{minimization problem final}.
\end{proof}

Now, after providing the auxiliary lemmas above, we are ready to state the convergence theorem.

\textbf{Theorem 2.}
\textit{Let $\{\Lmat^{(t)}\}$ be a series of points produced by a proximal quadratic minimization, constrained by $\mathcal{F}$ with some SPD matrices $0\prec\lambda^{\Mmat}_{min}\Imat\preceq \Mmat^{(t)} \preceq \lambda^{\Mmat}_{max}\Imat$ as the Hessian (as in \eqref{eq:minimization problem G}), followed by the linesearch \eqref{eq:linesearch}, starting from $\Lmat^{(0)}\in\mathcal{F}$. Then any limit point $\bar \Lmat$ of $\{\Lmat^{(t)}\}$ is a stationary point of \eqref{minimization problem final}.}

\begin{proof}
By Lemma \ref{lem:decrease}, the series $F(\Lmat^{(t)})$ is monotonically decreasing, hence it is also convergent as $F$ is bounded from below. Since the level set $\mathcal{R}$ is compact by Lemma \ref{lem:bounds}, and $\Lmat^{(t)}$ is bounded in $\mathcal{R}$, there is a converging subseries $\{\Lmat^{(t_j)}\}$, which let $\bar\Lmat$ be its limit point. By Lemma \ref{lem:subseries}, $\bar\Lmat$ is a stationary point of $F$ in \eqref{minimization problem final}. Since $F$ is continuous then $F(\Lmat^{(t_j)})\longrightarrow F(\bar \Lmat)$. Since the limit of $\{\Lmat^{(t)}\}$ and $F(\Lmat^{(t)})$ equal to that of any of their corresponding subseries, then $\{\Lmat^{(t)}\}\longrightarrow \bar\Lmat$ and $F(\Lmat^{(t)})\longrightarrow F(\bar \Lmat)$. 
\end{proof}

\section{RESULTS USING REAL DATA }
{\textcolor{black}{
In this subsection, we examine several gene expression datasets that are commonly used in research related to model selection, classification, and graphical models. These datasets are genetic regulatory networks,  in which individual genes are represented as nodes in a graph, with the edges denoting the conditional dependencies between their expression profiles.  It is important to note that these datasets can consist of thousands of genes while the number of samples is limited.  This requires sparsity assumptions to enable the topology identification. Detailed information regarding the datasets, namely Lymph, Arabidopsis, Leukemia, and Hereditary BC datasets, can be found in \citep{li2010inexact}.
As there is no ground truth in this case, we show convergence plots of the different methods in Fig. \ref{fig:bio}.
It can be seen that for these real data simulations, the proposed method converges significantly faster than the other methods.
}}

\begin{figure*}
    \centering
    \includegraphics[width=0.24\textwidth]{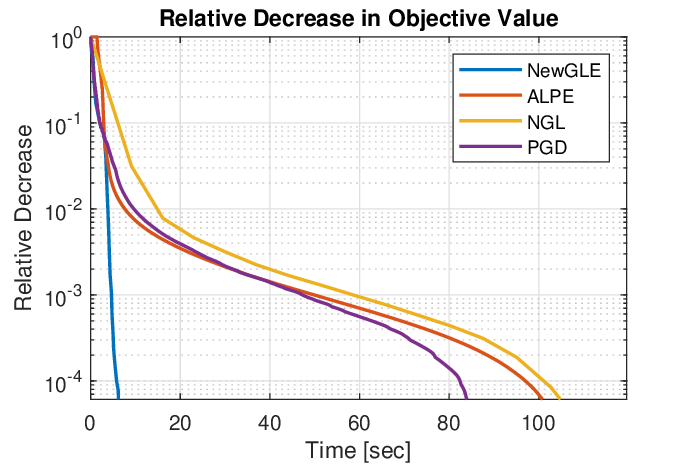}
    \includegraphics[width=0.24\textwidth]{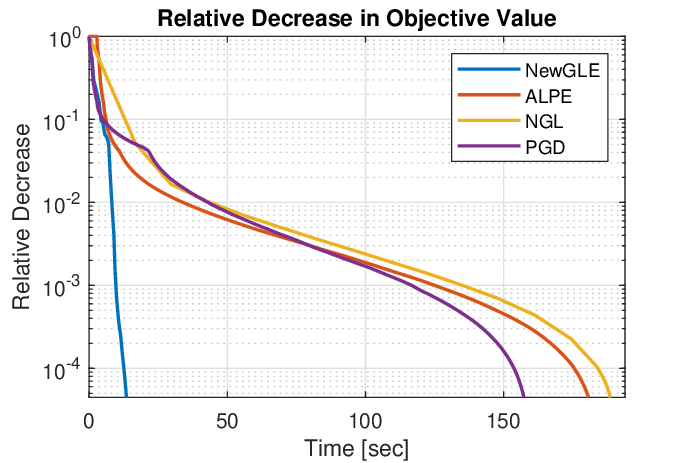}
    \includegraphics[width=0.24\textwidth]{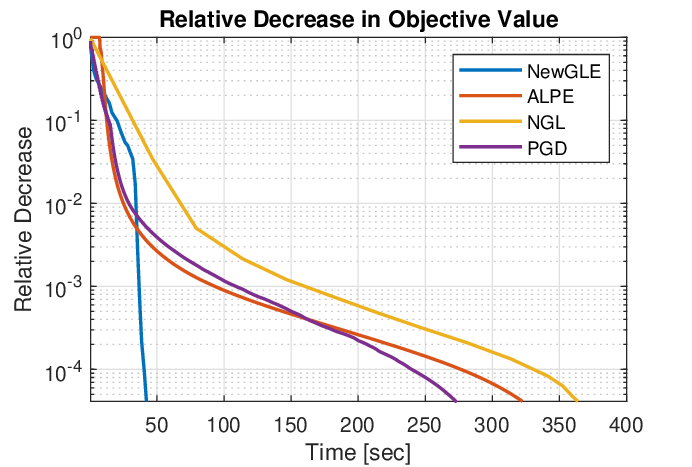}
    \includegraphics[width=0.24\textwidth]{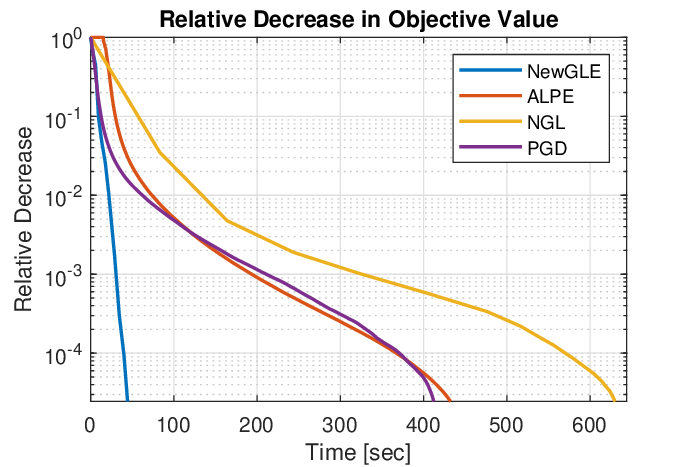}
    \caption{{\small Convergence plots 
    for the Lymph, Arabidopsis, Leukemia, and Hereditary BC datasets (left to right). The relative decrease is measured by $\frac{|F(\Lmat^{(t)})-F(\Lmat^{(0)})|}{ |F(\Lmat^*)-F(\Lmat^{(0)})|}$}}
    \label{fig:bio}
    \end{figure*}
\end{document}